\definecolor{darkblue}{rgb}{0,0,0.75}\definecolor{darkred}{rgb}{0.51,0.02,0}
\newtheorem{pr}{Proposition}
\newtheorem{thm}[pr]{Theorem}
\newtheorem{lm}[pr]{Lemma}
\newtheorem{df}[pr]{Definition}
\newtheorem{crl}[pr]{Corollary}
\newcommand{\defemph}[1]{\textbf{#1}}
\newcommand{\innerprod}[2]{\langle #1,#2\rangle}
\newcommand{\card}[1]{\left|#1\right|}
\newcommand{\abs}[1]{\left|#1\right|}
\newcommand{\mapping}[3]{#1\!: #2 \to #3}
\newcommand{\R}{\mathbb{R}}
\newcommand{\N}{\mathbb{N}}
\newcommand{\bigO}[1]{\mathcal{O}\left( #1 \right)}
\newcommand{\loss}{\ell}
\newcommand{\cummloss}[2]{L_{#1}(#2)}
\newcommand{\lossbound}[2]{\textbf{L}_{#1}(#2)}
\newcommand{\regret}[2]{R_{#1}(#2)}
\newcommand{\regretbound}[2]{\textbf{R}_{#1}(#2)}
\newcommand{\cummcomm}[2]{C_{#1}(#2)}
\newcommand{\commcost}[1]{c_{#1}}
\newcommand{\round}{t}
\newcommand{\totalRounds}{T}
\newcommand{\learner}{i}
\newcommand{\totalLearners}{m}
\newcommand{\balancingSet}{\mathcal{B}}
\newcommand{\uprule}{\varphi}
\newcommand{\syncop}{\sigma}
\newcommand{\divergence}{\delta}
\newcommand{\divThreshold}{\Delta}
\newcommand{\minibatchSGD}{\uprule^{\text{mSGD}}}
\newcommand{\modelSpace}{\mathcal{F}}
\newcommand{\model}{f}
\newcommand{\modelconf}{\mathbf{\model}}
\newcommand{\avgmodel}{\overline{\model}}
\newcommand{\refModel}{r}
\newcommand{\onlineAlgo}{\uprule} 
\newcommand{\protocol}{\Pi}
\newcommand{\dynProt}{\mathcal{D}}
\newcommand{\periodProt}{\mathcal{P}}
\newcommand{\contProt}{\mathcal{C}}
\newcommand{\dynavg}{\syncop_{\Delta,b}}
\newcommand{\dist}{P} 
\newcommand{\sampleSpace}{X}
\newcommand{\outputSpace}{Y}
\newcommand{\sample}{x}
\newcommand{\truelabel}{y}
\newcommand{\locsample}{E}
\newcommand{\inputSpace}{\sampleSpace\times\outputSpace}
\newcommand{\repeatthanks}{\textsuperscript{\thefootnote}}
\title{Efficient Decentralized Deep Learning by Dynamic Model Averaging}
\date{}
\author{
Michael Kamp\inst{1,2,3}\thanks{These authors contributed equally.} \and
Linara Adilova\inst{1,2}\repeatthanks \and
Joachim Sicking\inst{1,2}\repeatthanks \and 
Fabian H{\"u}ger\inst{4}\and
Peter Schlicht\inst{4}\and
Tim Wirtz\inst{1,2}\and
Stefan Wrobel\inst{1,2,3}
}
\institute{Fraunhofer IAIS \email{<name>.<surname>@iais.fraunhofer.de}
\and
Fraunhofer Center for Machine Learning
\and
University of Bonn \email{<surname>@cs.uni-bonn.de}
\and  
Volkswagen Group Research \email{<name>.<surname>@volkswagen.de}
}
\begin{document}

\maketitle

\begin{abstract}
We propose an efficient protocol for decentralized training of deep neural networks from distributed data sources. 
The proposed protocol allows to handle different phases of model training equally well and to quickly adapt to concept drifts. This leads to a reduction of communication by an order of magnitude compared to periodically communicating state-of-the-art approaches. 
Moreover, we derive a communication bound that scales well with the hardness of the serialized learning problem.
The reduction in communication comes at almost no cost, as the predictive performance remains virtually unchanged. Indeed, the proposed protocol retains loss bounds of periodically averaging schemes. An extensive empirical evaluation validates major improvement of the trade-off between model performance and communication which could be beneficial for numerous decentralized learning applications, such as autonomous driving, or voice recognition and image classification on mobile phones.
\end{abstract}

\section{Introduction}
\label{sec:intro}
Traditionally, deep learning models are trained on a single system or cluster by centralizing data from distributed sources. In many applications, this requires a prohibitive amount of communication. For gradient-based training methods, communication can be reduced by calculating gradients locally and communicating the sum of gradients periodically~\citep{dekel/jmlr/2012}, instead of raw data. This mini-batch approach performs well on tightly connected distributed systems~\citep{dean2012large, zhang2015deep, chen2016revisiting} (e.g., data centers and clusters). For many applications, however, centralization or even periodic sharing of gradients between local devices becomes infeasible due to the large amount of necessary communication.

For decentralized systems with limited communication infrastructure it was suggested to compute local updates~\citep{zinkevich/nips/2010} and average models periodically, instead of sharing gradients. 
Averaging models 
has three major advantages: (i) sending only the model parameters instead of a set of data samples reduces communication\footnote{Note that averaging models requires the same amount of communication as sharing gradients, since the vector of model parameters is of the same dimension as the gradient vector of the loss function.}; (ii) it allows to train a joint model without exchanging or centralizing privacy-sensitive data; and (iii) it can be applied to a wide range of learning algorithms, since it treats the underlying algorithm as a black-box.

This approach is used in convex optimization~\citep{shamir2016without, mcdonald2009efficient, zhang2012communication}. For non-convex objectives, a particular problem is that the average of a set of models can have a worse performance than any model in the set---see Figure~\ref{fig:illustration}\subref{fig:illustration:nonconv}. 
For the particular case of deep learning,~\citet{mcmahan2017communication} empirically evaluated model averaging in decentralized systems and termed it \defemph{Federated Learning}.

However, averaging periodically still invests communication independent of its utility, e.g., when all models already converged to an optimum. This disadvantage is even more apparent in case of concept drifts: periodic approaches cannot react adequately to drifts, since they either communicate so rarely that the 
models adapt too slowly to the change, or so frequently that they generate an immense amount of unnecessary communication in-between drifts.
\begin{figure}[t]
	\centering
	\subfigure[]{\includegraphics[width=6cm]{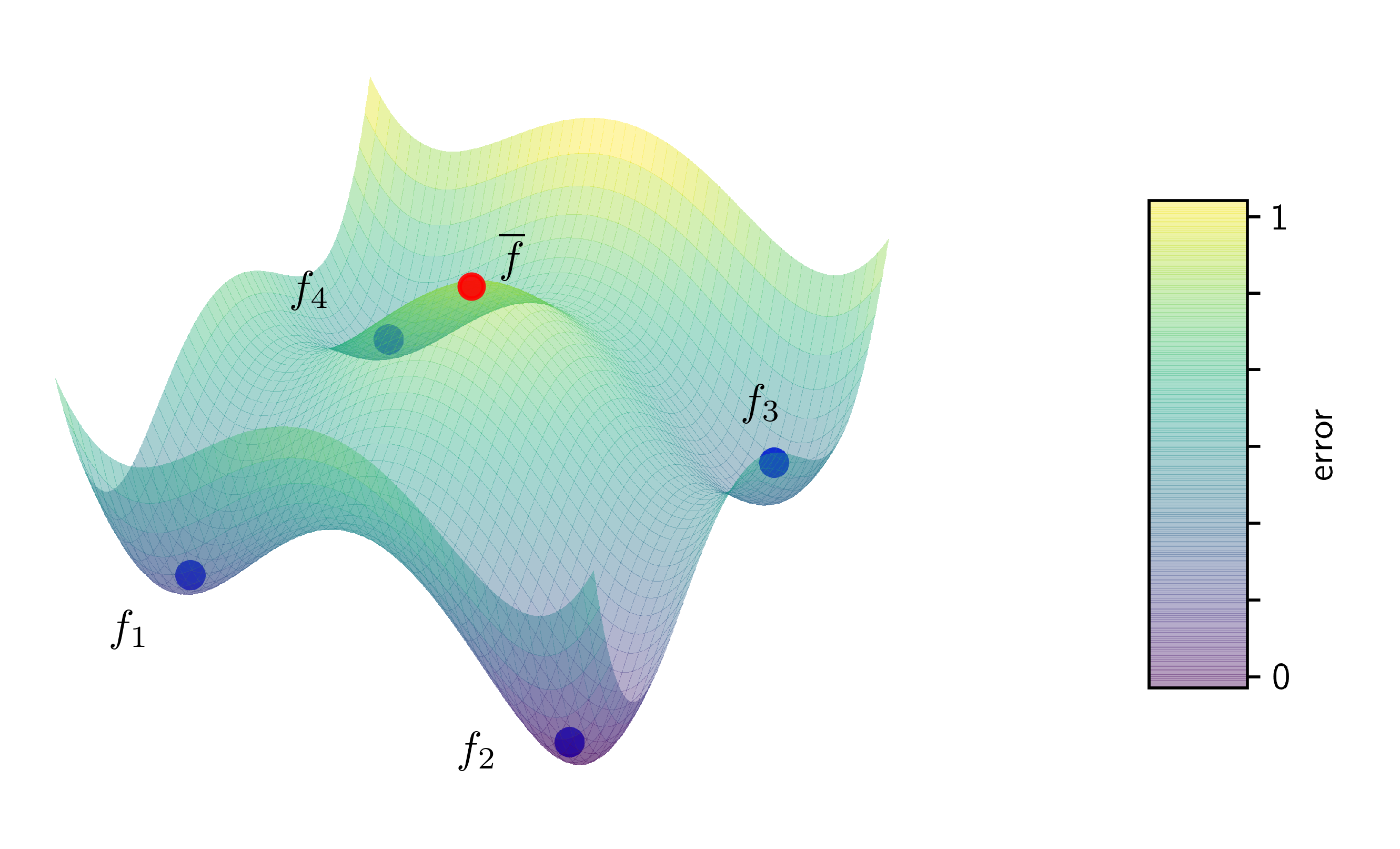}\label{fig:illustration:nonconv}}
	\hfill
	\subfigure[]{\includegraphics[width=6cm]{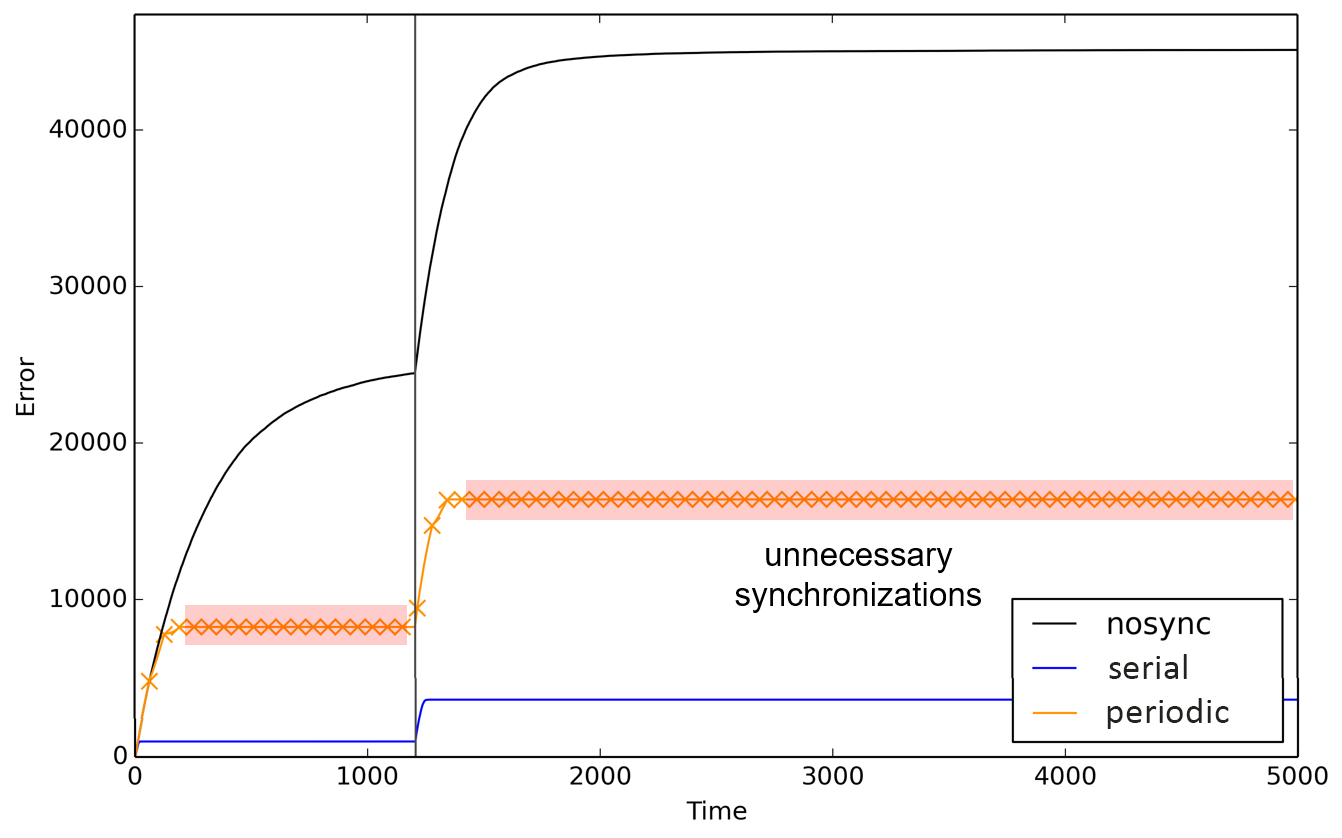}\label{fig:illustration:unnecessaryComm}}	
	\caption{ \subref{fig:illustration:nonconv} Illustration of the problem of averaging models in non-convex problems: each of the models $f_1,\dots,f_4$ has reached a local minimum, but their average $\overline{f}$ has a larger error than each of them.
	\subref{fig:illustration:unnecessaryComm} Cumulative error over time for a serial learning algorithm and two decentralized learning algorithms with $10$ learners, one that does not communicate (nosync) and one that communicates every $50$ time steps (periodic). The vertical line indicates a concept drift, i.e., a rapid change in the target distribution.}
	\label{fig:illustration}
\end{figure}

In~\citet{kamp2014communication} the authors proposed to average models dynamically, depending on the utility of the communication. 
The main idea is to reduce communication without losing predictive performance by investing the communication efficiently: When local learners do not suffer loss, communication is unnecessary and should be avoided (see Figure~\ref{fig:illustration}\subref{fig:illustration:unnecessaryComm}); similarly, when they suffer large losses, an increased amount of communication should be invested to improve their performances. The problem setting and a criterion for efficient approaches is defined in Section~\ref{sec:preliminaries}. 
This approach, denoted \defemph{dynamic averaging}, was proposed for online learning convex objectives~\citep{kamp2014communication,kamp2016kernels}.
We adapt dynamic averaging to the non-convex objectives of deep learning in Section~\ref{sec:dynProtocol}. 

Our contribution is the description and evaluation of a general method for decentralized training of deep neural networks that (i) substantially reduces communication while retaining high predictive performance and (ii) is in addition well-suited to concept drifts in the data. To that end, Section~\ref{sec:theory} shows that, for common learning algorithms, dynamic averaging is an efficient approach for non-convex problems, i.e., it retains the predictive performance of a centralized learner but is also adaptive to the current hardness of the learning problem.

A natural application for dynamic decentralized machine learning is \defemph{in-fleet learning} of autonomous driving functionalities: concept drifts occur naturally, since properties central for the modeling task may change---changing traffic behavior both over time and different countries or regions introduce constant and unforeseeable concept drifts. Moreover, large high-frequency data streams generated by multiple sensors per vehicle renders data centralization prohibitive in large fleets. 
Section~\ref{sec:exps} provides an extensive empirical evaluation of the dynamic averaging approach on classical deep learning tasks, as well as synthetic and real-world tasks with concept drift, including in-fleet learning of autonomous driving functionalities. The approach is compared to periodically communicating schemes, including \defemph{Federated Averaging}~\citep{mcmahan2017communication}, a state-of-the-art approach for decentralized deep learning---more recent approaches are interesting from a theoretical perspective but show no practical improvement~\citep{jiang2017collaborative}, or tackle other aspects of federated learning, such as non-iid data~\citep{smith2017federated} or privacy aspects~\citep{mcmahan2018Learning}.

Section~\ref{sec:discussion} discusses properties and limitations of dynamic averaging and puts it into context of related work, followed by a conclusion in Section~\ref{sec:conclusion}.

\section{Preliminaries}
\label{sec:preliminaries}
We consider a decentralized learning setting with $\totalLearners\in\N$ \defemph{local learners}, where each learner $\learner\in [\totalLearners]$ runs the same \defemph{learning algorithm} $\mapping{\uprule}{\modelSpace\times 2^X\times 2^Y}{\modelSpace}$ that trains a \defemph{local model} $\model^\learner$ from a \defemph{model space} $\modelSpace$ using local samples from an \defemph{input space} $\sampleSpace$ and \defemph{output space} $\outputSpace$.
We assume a streaming setting, where in each round $\round\in\N$ each learner $\learner\in [\totalLearners]$ observes a sample $\locsample_\round^\learner\subset\sampleSpace\times\outputSpace$ of size $|\locsample_\round^\learner|=B$, drawn iid from the same time variant distribution $\mapping{\dist_\round}{\inputSpace}{\R_+}$. 
The local learner uses its local model to make a prediction whose quality is measured by a \defemph{loss function} $\mapping{\loss}{\modelSpace\times\inputSpace}{\R_+}$. We abbreviate the loss of the local model of learner $\learner$ in round $\round$ by
$
\loss_\round^\learner\left(\model_{\round}^\learner\right) = \sum_{\left(\sample,\truelabel\right)\in\locsample_\round^\learner}\loss\left(\model_\round^\learner,\sample,\truelabel \right)
$\footnote{This setup includes online learning ($B=1$) and mini-batch training $B>1$. The gradient of $\loss_\round^\learner$ is the sum of individual gradients. Our approach and analysis also apply to heterogeneous sampling rates $B^\learner$ for each learner $\learner$.}.
%
The goal of decentralized learning is to minimize the \defemph{cumulative loss} up to a time horizon $\totalRounds\in\N$, i.e., 
\begin{equation}
\cummloss{}{\totalRounds,\totalLearners}=\sum_{\round=1}^{\totalRounds}\sum_{\learner=1}^{\totalLearners}\loss^\learner_\round\left(\model^\learner_\round\right)\enspace .
\label{eq:cumulativeLoss}
\end{equation}
Guarantees on the predictive performance, measured by the cumulative loss, are typically given by a \defemph{loss bound} $\lossbound{}{\totalRounds,\totalLearners}$. That is, for all possible sequences of losses it holds that $\cummloss{}{\totalRounds,\totalLearners}\leq \lossbound{}{\totalRounds,\totalLearners}$. 

In each round $\round\in\N$, local learners use a \defemph{synchronization operator} $\mapping{\syncop}{\modelSpace^\totalLearners}{\modelSpace^\totalLearners}$ that transfers the current set of local models, called the current \defemph{model configuration} ${\modelconf_\round=\{\model_\round^1,\dots,\model_\round^\totalLearners\}}$, into a single stronger \defemph{global model} $\syncop(\modelconf_\round)$ which replaces the local models.
We measure the performance of the operator in terms of communication by the \defemph{cumulative communication}, i.e.,
\[
\cummcomm{}{\totalRounds,\totalLearners}=\sum_{\round=1}^{\totalRounds}\commcost{}(\modelconf_{\round})\enspace ,
\]
where $\mapping{\commcost{}}{\modelSpace^\totalLearners}{\N}$ measures the number of bytes required by the protocol to synchronize the models $\modelconf_\round$
 at time $\round$.
We investigate synchronization operators that aggregate models by computing their average~\citep{mcmahan2017communication, mcdonald2009efficient, shamir2016without, zinkevich/nips/2010, zhang2012communication}, i.e., 
$
{\avgmodel = \sfrac{1}{\totalLearners}\sum_{\learner= 1}^{\totalLearners}\model^\learner}
$.
In the case of neural networks, we assume that all local models have the same architecture, thus their average is the average of their respective weights. 
We discuss the potential use of other aggregation operations in Section~\ref{sec:discussion}.
We denote the choice of learning algorithm together with the synchronization operator as a \defemph{decentralized learning protocol} $\protocol=(\uprule,\syncop)$. The protocol is evaluated in terms of the predictive performance and cumulative communication. 
In order to assess the efficiency of decentralized learning protocols in terms of the trade-off between loss and communication,~\citet{kampAdaptive2014} introduced two criteria: consistency and adaptiveness. 
\begin{df}[\citet{kampAdaptive2014}]
	A distributed online learning protocol $\protocol=(\uprule,\syncop)$ processing $\totalLearners\totalRounds$ inputs is \defemph{consistent} if it retains the loss of the serial online learning algorithm $\uprule$, i.e., 
	\[
	\cummloss{\protocol}{\totalRounds,\totalLearners} \in \bigO{ \cummloss{\uprule}{\totalLearners\totalRounds} }\enspace .
	\]
	The protocol is \defemph{adaptive} if its communication bound is linear in the number of local learners $\totalLearners$ and the loss $\cummloss{\uprule}{\totalLearners\totalRounds}$ of the serial online learning algorithm, i.e.,
	\[
	\cummcomm{\protocol}{\totalRounds,\totalLearners} \in \bigO{ \totalLearners \cummloss{\uprule}{\totalLearners\totalRounds} }\enspace .
	\]
	\label{def:efficiency}
\end{df}
A decentralized learning protocol is \defemph{efficient} if it is both consistent and adaptive.
Each one of the criteria can be trivially achieved: A non-synchronizing protocol is adaptive but not consistent, a protocol that centralizes all data is consistent but not adaptive. Protocols that communicate periodically are consistent~\citep{dekel/jmlr/2012, zinkevich/nips/2010}, i.e., they achieve a predictive performance comparable to a model that is learned centrally on all the data. However, they require an amount of communication linear in the number of learners $\totalLearners$ and the number of rounds $\totalRounds$, independent of the loss. Thus they are not adaptive.

In the following section, we recapitulate dynamic averaging and apply it to the non-convex problem of training deep neural networks. In Section~\ref{sec:theory} we discuss in which settings it is efficient as in Definition~\ref{def:efficiency}.


\section{Dynamic Averaging}
\label{sec:dynProtocol}

In this section, we recapitulate the dynamic averaging protocol~\citep{kamp2016kernels} for synchronizations based on quantifying their effect (Algorithm~\ref{alg:protocol}).
%
Intuitively, communication is not well-invested in situations where all models are already approximately equal---either because they were updated to similar models or have merely changed at all since the last averaging step---and it is more effective if models are diverse. A simple measure to quantify the effect of synchronizations is given by the \defemph{divergence} of the current model configuration, i.e., 
\begin{equation}
\divergence({\modelconf})=\frac{1}{\totalLearners}\sum_{\learner=1}^\totalLearners \left\|\model^\learner-\avgmodel\right\|^2 \enspace .
\label{eq:defDivergence}
\end{equation}
Using this, we define the dynamic averaging operator
that allows to omit synchronization in cases where the divergence of a model configuration is low.
\begin{df}[\citet{kamp2014communication}]
	\label{def:dynamicsync}
	A \defemph{dynamic averaging operator} with positive divergence threshold $\Delta \in \R_+$ and batch size $b \in \N$ is a synchronization operator $\dynavg$ such that $\dynavg(\modelconf_t)=\modelconf_t$ if $\round \mod b \neq 0$ and otherwise: (i) $\avgmodel_t=\overline{\dynavg(\modelconf_t)}$, i.e., it leaves the mean model invariant, and (ii) $\divergence\left(\dynavg(\modelconf_\round)\right) \leq \Delta$, i.e., after its application the model divergence is bounded by $\Delta$.
\end{df}
An operator adhering to this definition does not generally put all nodes into sync (albeit we still refer to it as \textit{synchronization} operator).
In particular it allows to leave all models untouched as long as the divergence remains below $\Delta$ or to only average a subset of models in order to satisfy the divergence constraint. 

\begin{algorithm2e}[ht]
	\caption{Dynamic Averaging Protocol}
	\label{alg:protocol}
    \smallskip
    \textbf{Input:}    divergence threshold $\Delta$, batch size $b$\\    
	\smallskip
	\textbf{Initialization:}\\
	\begin{algorithmic}[0]
		\STATE local models $\model^1_1,\dots,\model^\totalLearners_1 \leftarrow$ one random $\model$
		\STATE reference vector $r \leftarrow \model$
		\STATE violation counter $v \leftarrow 0$
	\end{algorithmic}
	\smallskip
	\textbf{Round }$\round$\textbf{ at node }$\learner$\textbf{:}\\
	\begin{algorithmic}[0]
		\STATE \textbf{observe} $\locsample_\round^\learner\subset\sampleSpace\times\outputSpace$ 
		\STATE \textbf{update} $\model^\learner_{\round-1}$ using the learning algorithm $\uprule$\\
		\IF{$\round \mod b=0$ \textbf{and} $\|\model^\learner_{\round}-r\|^2> \Delta$}
		\STATE \textbf{send} $\model^\learner_{\round}$ to coordinator (violation)
		\ENDIF
	\end{algorithmic}
	\smallskip
	\textbf{At coordinator on violation:}\\
	\begin{algorithmic}[0]
		\STATE \textbf{let} $\balancingSet$ be the set of nodes with violation
		\STATE $v\leftarrow v+\card{\balancingSet}$
		\STATE \textbf{if} $v=\totalLearners$ \textbf{then} $\balancingSet\leftarrow [\totalLearners]$, $v\leftarrow 0$
		\WHILE{$\balancingSet \neq [\totalLearners]$ \textbf{and} $\left\|\frac{1}{\balancingSet}\sum_{\learner \in \balancingSet}\model^\learner_\round-r\right\|^2> \Delta$}
		\STATE \textbf{augment} $\balancingSet$ by augmentation strategy
		\STATE \textbf{receive} models from nodes added to $\balancingSet$
		\ENDWHILE
		\STATE \textbf{send} model $\avgmodel=\frac{1}{\balancingSet}\sum_{\learner \in \balancingSet}\model^\learner_\round$ to nodes in $\balancingSet$
		\STATE \textbf{if} $\balancingSet=[\totalLearners]$ also set new reference vector $r\leftarrow\avgmodel$
	\end{algorithmic}
\end{algorithm2e}
The \defemph{dynamic averaging protocol} $\dynProt=(\uprule,\dynavg)$ synchronizes the local learners using the dynamic averaging operator $\dynavg$.
This operator only communicates when the model divergence 
exceeds a \defemph{divergence threshold} $\divThreshold$. 
In order to decide when to communicate locally, at round $\round\in\N$, each local learner $\learner\in [\totalLearners]$  monitors the \defemph{local condition} $\|\model_{\round}^{\learner}-\refModel\|^2\leq\Delta$ for a \defemph{reference model} $\refModel\in\modelSpace$~\citep{sharfman2008shape} that is common among all learners (see~\citep{keren2012shape,sharfman/tods/2007,gabel/IPDPS/2014,lazerson2015monitoring,keren2014geometric} for a more general description of this method). The local conditions guarantee that if none of them is violated, i.e., for all $\learner\in [\totalLearners]$ it holds that $\|\model^\learner_\round - r\|^2\leq\Delta$, then the divergence does not exceed the threshold, i.e., $\divergence(\modelconf_\round)\leq\Delta$~\citep[Theorem~6]{kamp2014communication}.
The closer the reference model is to the true average of local models, the tighter are the local conditions. 
Thus, the first choice for the reference model is the average model from the last synchronization step. The local condition is checked every $b\in\N$ rounds. This allows using the common mini-batch approach~\citep{bottou1991stochastic} for training deep neural networks.

If one or more local conditions are violated, all local models can be averaged---an operation referred to as \defemph{full synchronization}. However, on a local violation the divergence threshold is not necessarily crossed. In that case, the violations may be locally balanced: the coordinator incrementally queries other local learners for their models; if the average of all received models lies within the safe zone, it is transferred back as new model to all participating nodes.
If all nodes have been queried, the result is equivalent to a full synchronization and the reference vector is updated.
In both cases, the divergence of the model configuration is bounded by $\Delta$ at the end of the balancing process, because all local conditions hold.
Also, it is easy to check that this protocol leaves the global mean model unchanged. Hence, it is complying to Def.~\ref{def:dynamicsync}. 
In the following Section, we theoretically analyze the loss and communication of dynamic averaging.

\section{Efficiency of Dynamic Averaging}
\label{sec:theory}
In order to assess the predictive performance and communication cost of the dynamic averaging protocol for deep learning, we compare it to a periodically averaging approach: Given a learning algorithm $\onlineAlgo$, the \defemph{periodic averaging protocol} $\periodProt=(\onlineAlgo,\syncop_b)$ synchronizes the current model configuration ${\modelconf}$ every $b\in\N$ time steps by replacing all local models by their joint average
$
\avgmodel=\sfrac{1}{\totalLearners}\sum_{\learner=1}^{\totalLearners}\model^\learner
$. That is, the synchronization operator is given by
\[
\syncop_b(\modelconf_t)=	\begin{cases}
\left(\avgmodel_\round,\dots,\avgmodel_\round\right), &\text{ if } b \equiv O(\round)\\
\enspace\modelconf_\round = (\model_\round^1,\dots,\model_\round^\totalLearners), &\text{ otherwise}\\
\end{cases}\enspace .
\]
A special case of this is the \defemph{continuous averaging protocol} $\contProt=(\onlineAlgo,\syncop_1)$, synchronizing every round, i.e., for all $\round\in\N$, the synchronization operator is given by $
\syncop_1\left(\modelconf_\round\right)=\left(\avgmodel_\round,\dots,\avgmodel_\round\right)
$.
As base learning algorithm we use mini-batch SGD algorithm $\minibatchSGD_{B,\eta}$~\citep{dekel/jmlr/2012} with mini-batch size $B\in\N$ and learning rate $\eta\in\R_+$ commonly used in deep learning~\citep{bottou1991stochastic}. One step of this learning algorithm given the model $\model\in\modelSpace$ can be expressed as
\[
\minibatchSGD_{B,\eta}(\model) = \model -\eta\sum_{j=1}^B \nabla\loss^j(\model)\enspace .
\]
Let $\contProt^{\text{mSGD}}=(\minibatchSGD_{B,\eta},\syncop_1)$ denote the continuous averaging protocol using mini-batch SGD. For $\totalLearners\in\N$ learners with the same model $\model\in\modelSpace$, $\totalLearners B$ training samples 
$(\sample_1,\truelabel_1),\dots,(\sample_{\totalLearners B},\truelabel_{\totalLearners B})$, and
corresponding loss functions $\loss^\learner(\cdot) = \loss(\cdot,\sample_\learner,\truelabel_\learner)$, one step of $\contProt^{\text{mSGD}}$ is 
\[
\syncop_1\left(\left(\minibatchSGD_{B,\eta}(\model),\dots,\minibatchSGD_{B,\eta}(\model)\right)\right) = \frac{1}{\totalLearners}\sum_{\learner=1}^{\totalLearners}\left(\model - \eta\textstyle\sum_{j=1}^{B}\nabla\loss^{(\learner - 1)B + j}(\model)\right)\enspace .
\]
We compare $\contProt^{\text{mSGD}}$ to the serial application of mini-batch SGD. 
%
It can be observed that continuous averaging with mini-batch SGD on $\totalLearners\in\N$ learners with mini-batch size $B$ is equivalent to serial mini-batch SGD with a mini-batch size of $\totalLearners B$ and a learning rate that is $\totalLearners$ times smaller.
\begin{pr}
For $\totalLearners\in\N$ learners, a mini-batch size $B\in\N$, $\totalLearners B$ training samples $(\sample_1,\truelabel_1),\dots,(\sample_{\totalLearners B},\truelabel_{\totalLearners B})$, corresponding loss functions $\loss^\learner(\cdot) = \loss(\cdot,\sample_\learner,\truelabel_\learner)$, a learning rate $\eta\in\R_+$, and a model $\model\in\modelSpace$, it holds that
\[
\syncop_1\left(\left(\minibatchSGD_{B,\eta}(\model),\dots,\minibatchSGD_{B,\eta}(\model)\right)\right) = \minibatchSGD_{\totalLearners B,\sfrac{\eta}{\totalLearners}}(\model)\enspace .
\]
\label{prop:contAvgEqualMiniBatch}
\end{pr}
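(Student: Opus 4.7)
The plan is a direct calculation: both sides are explicit one-step updates expressed in closed form, so I only need to unfold the definitions and reindex a double sum into a single sum. No inequalities, convexity assumptions, or limiting arguments are required.

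First I would expand the left-hand side by substituting the definition of $\minibatchSGD_{B,\eta}$ given in the excerpt. Learner $\learner$ starts from the common model $\model$ and, using its designated block of $B$ training samples indexed from $(\learner-1)B+1$ to $\learner B$, produces
\[
\minibatchSGD_{B,\eta}(\model) \;=\; \model - \eta\sum_{j=1}^{B}\nabla\loss^{(\learner-1)B+j}(\model).
\]
This matches exactly the partition of the $\totalLearners B$ samples into $\totalLearners$ contiguous blocks of size $B$ that is implicit in the statement.

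Next I would apply $\syncop_1$, which replaces the configuration by the arithmetic mean of its entries. Because every learner starts from the same $\model$, the $\model$-terms average back to $\model$, and by linearity the $\eta$ factors out to give $\eta/\totalLearners$ in front of the combined gradient sum:
\[
\syncop_1\!\left(\left(\minibatchSGD_{B,\eta}(\model),\dots,\minibatchSGD_{B,\eta}(\model)\right)\right)
\;=\; \model - \frac{\eta}{\totalLearners}\sum_{\learner=1}^{\totalLearners}\sum_{j=1}^{B}\nabla\loss^{(\learner-1)B+j}(\model).
\]
Reindexing the double sum via $k=(\learner-1)B+j$, which bijects onto $\{1,\dots,\totalLearners B\}$, collapses this to $\model - (\eta/\totalLearners)\sum_{k=1}^{\totalLearners B}\nabla\loss^{k}(\model)$. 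Finally, unfolding the right-hand side $\minibatchSGD_{\totalLearners B,\eta/\totalLearners}(\model)$ by the same definition yields precisely this expression, completing the equality.

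The main obstacle is essentially bookkeeping rather than a conceptual hurdle: one must be careful that the sample assignment to learners on the LHS (blocks of $B$ consecutive indices) is compatible with the single batch of size $\totalLearners B$ used on the RHS, so that the $\loss^k$ on both sides refer to the same loss functions. Since the statement fixes a global enumeration $(\sample_1,\truelabel_1),\dots,(\sample_{\totalLearners B},\truelabel_{\totalLearners B})$ and correspondingly defines $\loss^\learner(\cdot)=\loss(\cdot,\sample_\learner,\truelabel_\learner)$, this alignment is built into the setup and the reindexing step goes through without further argument.
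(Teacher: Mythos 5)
Your proposal is correct and follows essentially the same route as the paper's proof: expand the definition of $\minibatchSGD_{B,\eta}$, average over the $\totalLearners$ learners so the common $\model$ is preserved and the factor $\sfrac{1}{\totalLearners}$ attaches to $\eta$, and reindex the double sum over blocks into a single sum over $\{1,\dots,\totalLearners B\}$. The extra care you take about aligning the block assignment of samples with the global enumeration is exactly the bookkeeping the paper performs implicitly.
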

\begin{proof}
\begin{equation*}
\begin{split}
\syncop_1&\left(\left(\minibatchSGD_{B,\eta}(\model),\dots,\minibatchSGD_{B,\eta}(\model)\right)\right) = \frac{1}{\totalLearners}\sum_{\learner=1}^{\totalLearners}\left(\model - \eta\sum_{j=1}^{B}\nabla\loss^{(\learner - 1)B + j}(\model)\right)\\
&= \frac{1}{\totalLearners}\totalLearners \model - \frac{1}{m}\eta\sum_{\learner=1}^{\totalLearners}\sum_{j=1}^{B}\nabla\loss^{(\learner - 1)B + j}(\model)=\model  - \frac{1}{m}\eta\sum_{j=1}^{\totalLearners B}\nabla\loss^{j}(\model)=\minibatchSGD_{\totalLearners B,\sfrac{\eta}{\totalLearners}}(\model)
\end{split}
\end{equation*}
\qed
\end{proof}
In particular, Proposition~\ref{prop:contAvgEqualMiniBatch} holds for continuous averaging with a mini-batch size of $B=1$, i.e., classic stochastic gradient descent. 
From Proposition~\ref{prop:contAvgEqualMiniBatch} it follows that continuous averaging is consistent as in Definition~\ref{def:efficiency}, since it retains the loss bound of serial mini-batch SGD and classic SGD. If the loss function is locally convex in an $\bigO{\Delta}$-radius around the current average---a non-trivial but realistic assumption~\citep{nguyen2017loss,keskar2017on}---Theorem~2 in~\citet{boley2013communication} 
guarantees that for SGD, dynamic averaging has a predictive performance similar to any periodically communicating protocol, in particular to $\syncop_1$ (see Appendix~\ref{app:sec:theory} for details).
For this case it follows that dynamic averaging using SGD for training deep neural networks is consistent.
Theorem~2 in~\citet{kampAdaptive2014} shows that the cumulative communication of the dynamic averaging protocol using SGD and a divergence threshold $\Delta$ is bounded by
\[
\cummcomm{}{\totalRounds,\totalLearners}\in\bigO{\frac{\commcost{}(\modelconf)}{\sqrt{\Delta}}\cummloss{}{\totalRounds,\totalLearners}}\enspace ,
\]
where $\commcost{}(\modelconf)$ is the number of bytes required to be communicated to average a set of deep neural networks. Since each neural network has a fixed number of weights, $\commcost{}(\modelconf)$ is in $\bigO{\totalLearners}$. It follows that dynamic averaging is adaptive.
Thus, using dynamic averaging with stochastic gradient descent for the decentralized training of deep neural networks is efficient as in Definition~\ref{def:efficiency}.

Note that the synchronization operator can be implemented using different assumptions on the system's topology and communication protocol, i.e., in a peer-to-peer fashion, or in a hierarchical communication scheme. For simplicity, in our analysis of the communication of different synchronization operators we assume that the synchronization operation is performed by a dedicated coordinator node. This coordinator is able to poll local models, aggregate them and send the global model to the local learners. 

\section{Empirical Evaluation}
\label{sec:exps}
This section empirically evaluates dynamic averaging for training deep neural networks.
To emphasize the theoretical result from Section~\ref{sec:theory}, we show that dynamic averaging indeed retains the performance of periodic averaging with substantially less communication. This is followed by a comparison of our approach with a state-of-the-art communication approach\footnote{The code of the experiments is available at \url{https://bitbucket.org/Michael_Kamp/decentralized-machine-learning}.}.
%
%
%
The performance is then evaluated in the presence of concept drifts. 
Combining the aforementioned aspects, we apply our protocol to a non-convex objective with possible concept drifts from the field of autonomous driving. 

Throughout this section, if not specified separately, we consider 
mini-batch SGD $\minibatchSGD_{B,\eta}$
as learning algorithm, since recent studies indicate that it is particularly suited for training deep neural networks~\citep{zhang2017understanding}. That is, we consider communication protocols $\Pi=(\minibatchSGD_{B,\eta}, \sigma)$ with various synchronization operators $\syncop$. The hyper-parameters of the protocols and the mini-batch SGD have been optimized on an independent dataset. Details on the experiments, including network architectures, can be found in the Appendix~\ref{app:sec:exps}.

\subsubsection{Dynamic Averaging for Training Deep Neural Networks:}\label{ssec:non_convex}
%
%
To evaluate the performance of dynamic averaging in deep learning, we first compare it to periodic averaging for training a convolutional neural network (CNN) on the MNIST classification dataset~\citep{lecun1998mnist}. We furthermore compare both protocols to a non-synchronizing protocol, denoted \defemph{nosync}, and a serial application of the learning algorithm on all data, denoted \defemph{serial}. 
%



%
\begin{wrapfigure}{r}{0.5\textwidth}
	\includegraphics[width=0.5\textwidth]{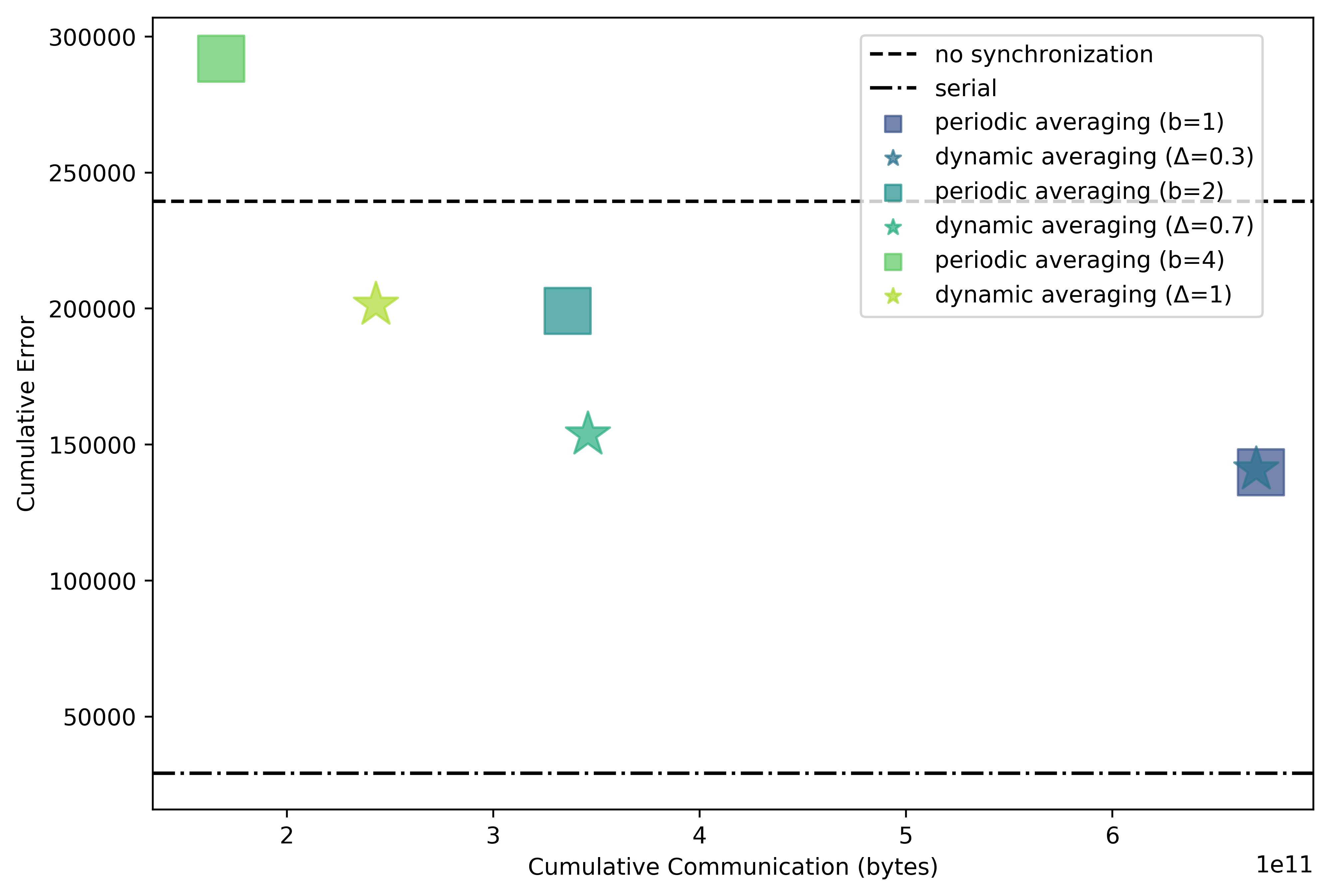}
	\caption{Cumulative loss and communication of distributed learning protocols with $m=100$ (similar to~\citet{mcmahan2017communication}) learners with mini-batch size $B=10$, each observing $T=14000$ samples (corresponding to $20$ epochs for the serial baseline). 
	}
	\label{fig:100nodes_per14000_performance_main}
\end{wrapfigure} 
Figure~\ref{fig:100nodes_per14000_performance_main} shows the cumulative error of several setups of dynamic and periodic averaging, as well as the nosync and serial baselines. The experiment confirms that for each setup of the periodic averaging protocol a setup of dynamic averaging can be found that reaches a similar predictive performance with substantially less communication (e.g., a dynamic protocol with $\sigma_{\Delta=0.7}$ reaches a performance comparable to a periodic protocol with $\sigma_{b=1}$ using only half of the communication). 
The more learners communicate, the lower their cumulative loss, with the serial baseline performing the best. 

The advantage of the dynamic protocols over the periodic ones in terms of communication is in accordance with the convex case. For large synchronization periods, however, synchronizing protocols ($\syncop_{b=4}$) have even larger cumulative loss than the nosync baseline. This behavior cannot happen in the convex case, where averaging is always superior to not synchronizing~\citep{kamp2014communication}. In contrast, in the non-convex case local models can converge to different local minima. Then their average might have a higher loss value than each one of the local models (as illustrated in Figure~\ref{fig:illustration}\subref{fig:illustration:nonconv}). 
\subsubsection{Comparison of the Dynamic Averaging Protocol with FedAvg:}\label{ssec:fedsgd}
\begin{figure}[t]
	\centering
	\begin{minipage}{5.8cm}
		\includegraphics[width=5.4cm]{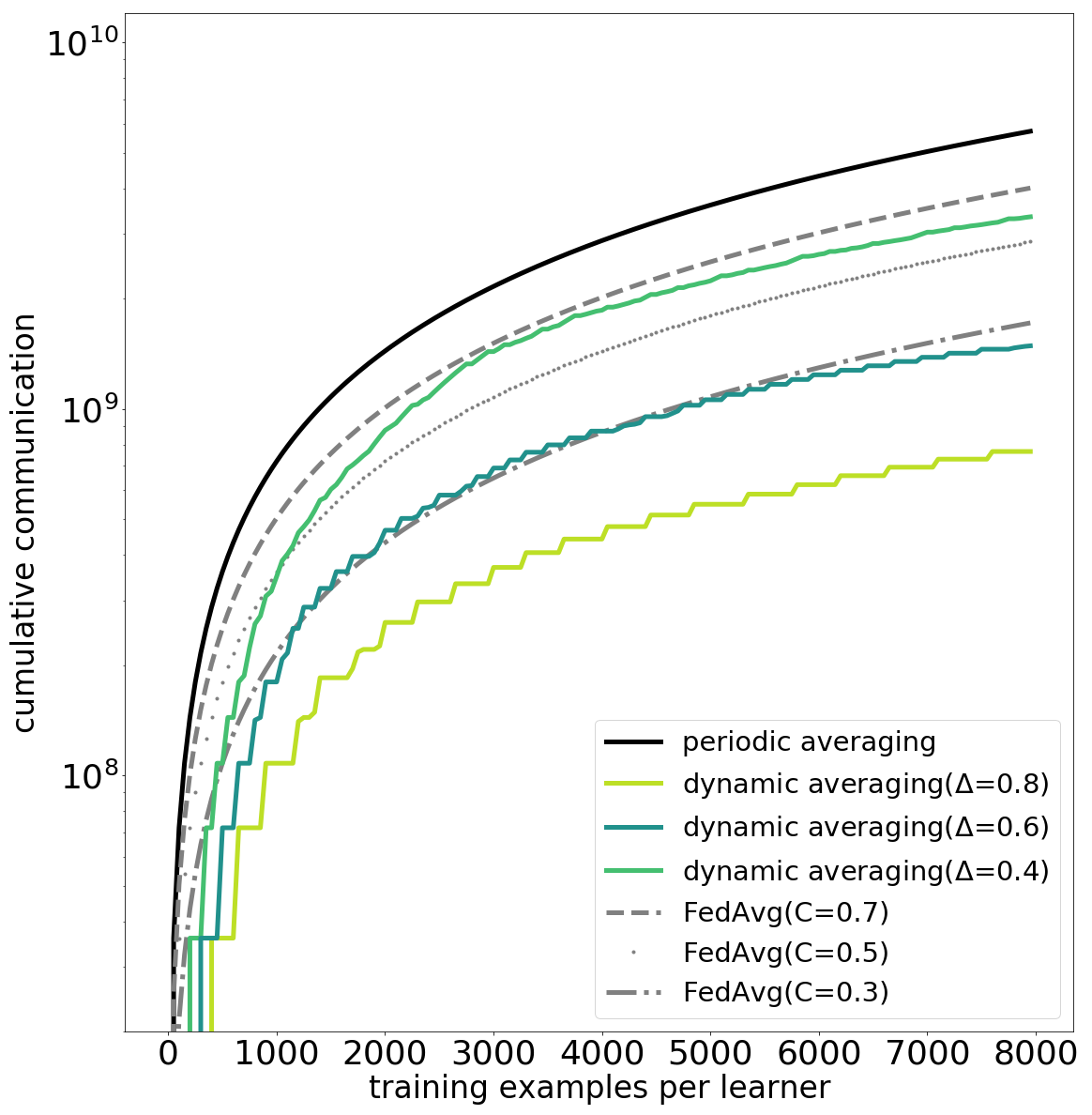}
		\caption{Evolution of cumulative communication for different dynamic averaging and FedAvg protocols on $m=30$ learners using a mini-batch size $B=10$.	\vspace{-0.3cm}}
		\label{fig:fedsgd_errorCom_1}
	\end{minipage}
	\hfill
	\begin{minipage}{5.8cm}
		\includegraphics[width=5.4cm]{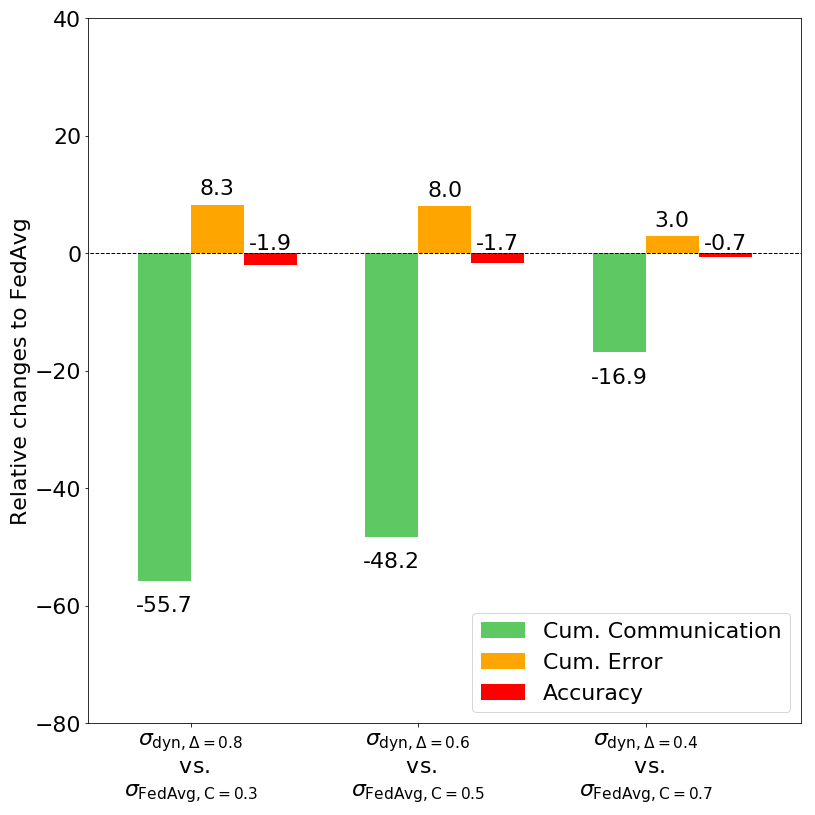}
		\caption{Comparison of the best performing settings of the dynamic averaging protocol with their \mbox{FedAvg} counterparts.\vspace{-0.3cm}}
		\label{fig:fedsgd_errorCom2_2}
	\end{minipage}
\end{figure}
Having shown that dynamic averaging outperforms standard periodic averaging, we proceed by comparing it to a highly communication-efficient variant of periodic averaging, denoted \defemph{FedAvg}~\citep{mcmahan2017communication}, which poses a state-of-the-art for decentralized deep learning under communication-cost constraints.
%

Using our terminology, FedAvg is a periodic averaging protocol that uses only a randomly sampled subset of nodes in each communication round. This subsampling leads to a reduction of total communication by a constant factor compared to standard periodic averaging. In order to compare dynamic averaging to FedAvg, we repeat the MNIST classification using CNNs and multiple configurations of dynamic averaging and FedAvg. 

Figure~\ref{fig:fedsgd_errorCom_1} shows the evolution of cumulative communication during model training comparing dynamic averaging to the optimal configuration of FedAvg with $b=5$ and $C=0.3$ for MNIST (see Section~3 in~\citet{mcmahan2017communication}) and variants of this configuration. 
We find noteworthy spreads between the communication curves, while all approaches have comparable losses. The communication amounts of all FedAvg variants increase linearly during training. 
The smaller the fraction of learners, $C\in (0,1]$, involved in synchronization, the smaller the amount of communication. In contrast, we observe step-wise increasing curves for all dynamic averaging protocols which reflect their inherent irregularity of communication.
Dynamic averaging with  $\Delta=0.6$ and $\Delta=0.8$ beat the strongest FedAvg configuration in terms of cumulative communication, the one with $\Delta=0.8$ even with a remarkable margin. We find these improvements of communication efficiency to come at almost no cost: Figure~\ref{fig:fedsgd_errorCom2_2} compares the three strongest configurations of dynamic averaging to the best performing FedAvg ones, showing a reduction of over $50\%$ in communication with an increase in cumulative loss by only $8.3\%$. The difference in terms of classification accuracy is even smaller, dynamic averaging is only worse by $1.9\%$. Allowing for more communication improves the loss of dynamic averaging to the point where dynamic averaging has virtually the same accuracy as FedAvg with $16.9\%$ less communication.

\vspace{-0.2cm}
\subsubsection{Adaptivity to Concept Drift:}\label{ssec:drift}
The advantage of dynamic averaging over any periodically communicating protocol lies in the adaptivity to the current hardness of the learning problem, measured by the in-place loss. For fixed target distributions, this loss decreases over time so that the dynamic protocol reduces the amount of communication continuously until it reaches quiescence, if no loss is suffered anymore. In the presence of concept drifts, such quiescence can never be reached; after each drift, the learners have to adapt to the new target. 
\begin{figure}[t]
	\centering
	\subfigure[cumulative loss and communication]{\label{fig:conceptDriftPerf}\includegraphics[height=3.9cm]{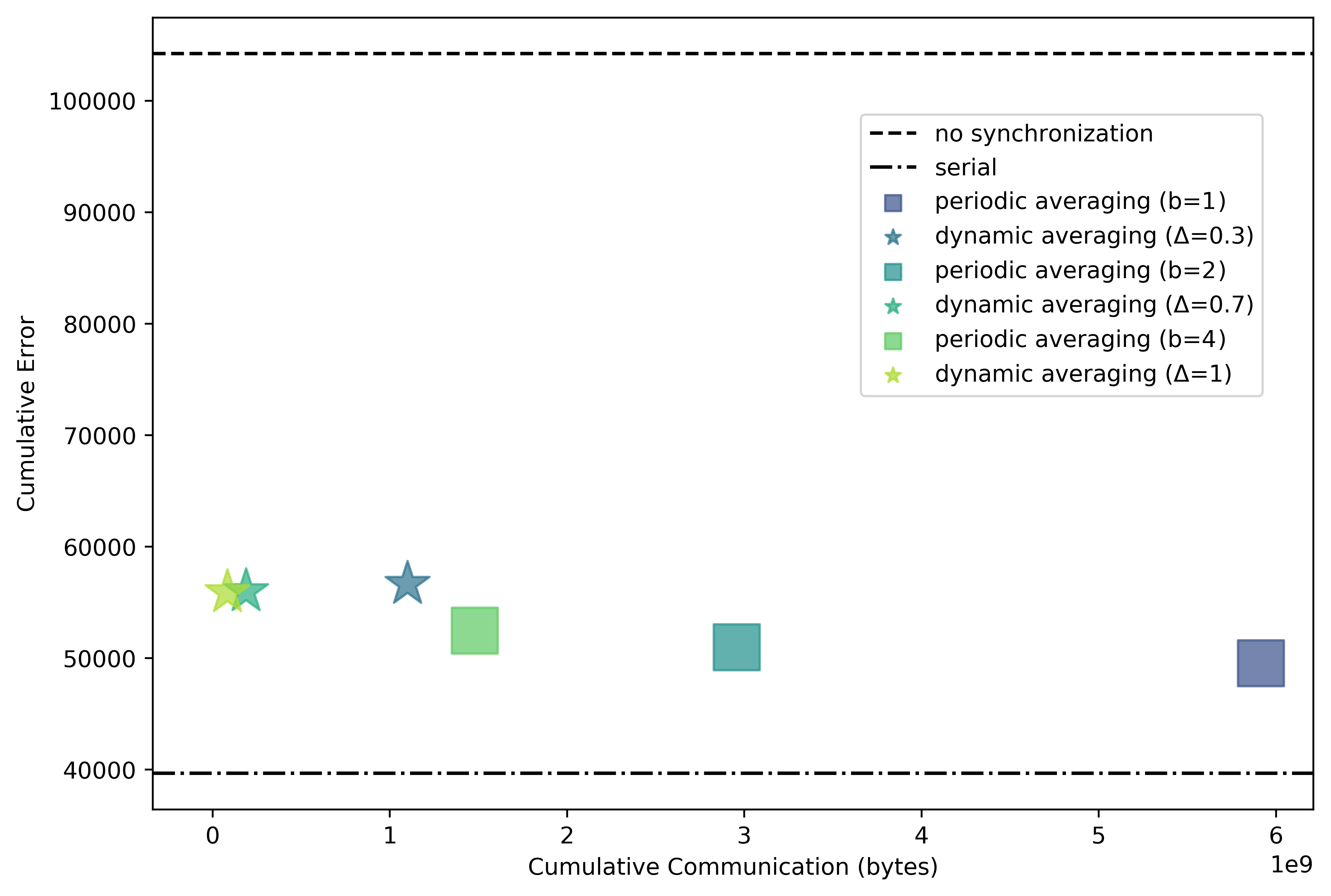}}
	\hfill
	\subfigure[cumulative communication]{\label{fig:conceptDriftComm}\includegraphics[height=3.9cm]{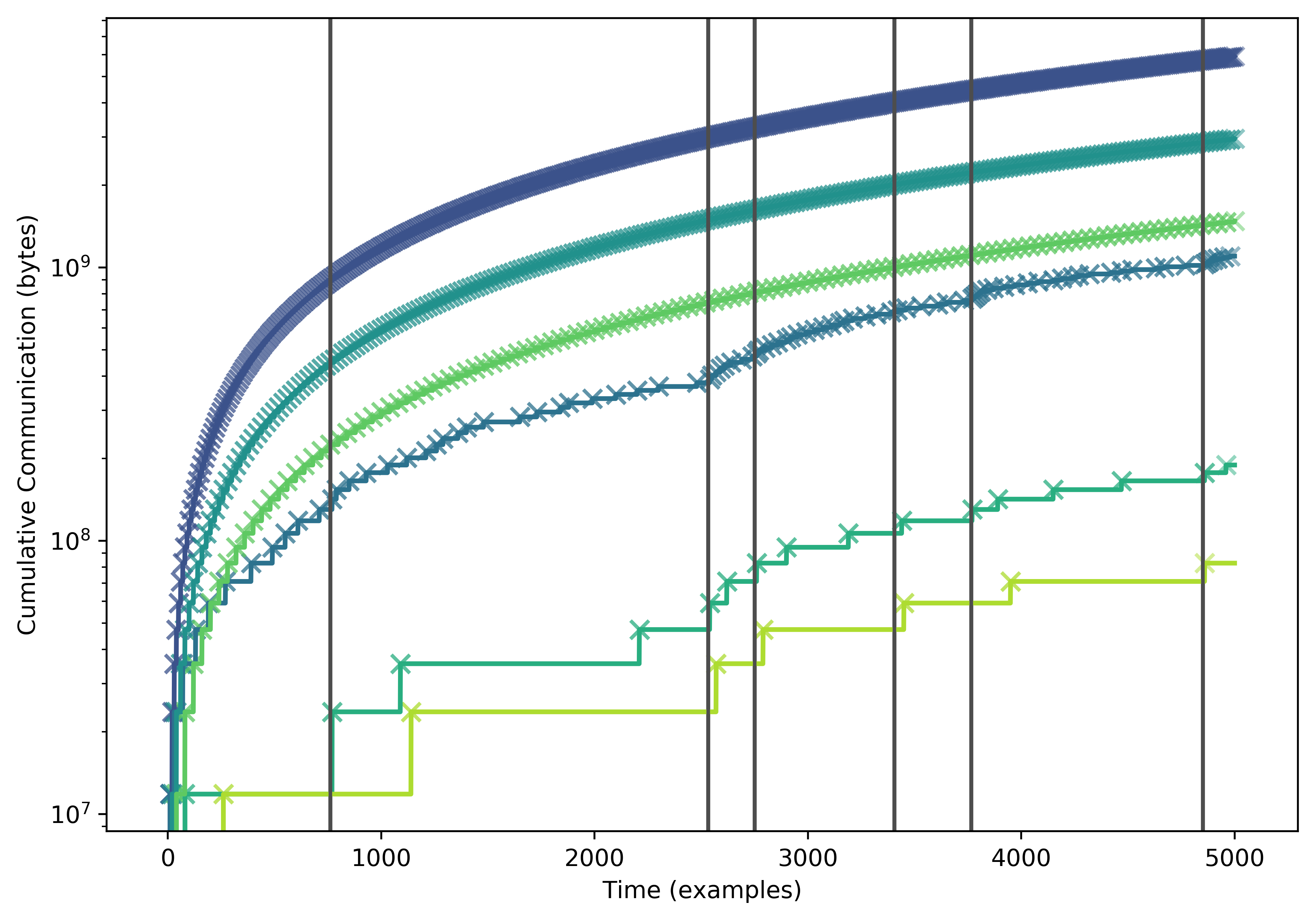}}
	\caption{Experiment with periodic and dynamic averaging protocols on $m=100$ learner after training on $5000$ samples per learner from a synthetic dataset with concept drifts (indicated by vertical lines in~\subref{fig:conceptDriftComm}).  
	\vspace{-0.3cm}
	}	
	\label{fig:conceptDriftExps}
\end{figure}
In order to investigate the behavior of dynamic and periodic averaging in this setting, we perform an experiment on a synthetic dataset generated by a random graphical model~\citep{bshouty/ml/2012}. Concept drifts are simulated by generating a new random graphical model. Drifts are triggered at random with a probability of $0.001$ per round. 

Figure~\ref{fig:conceptDriftExps}\subref{fig:conceptDriftPerf} shows
that in terms of predictive performance, dynamic and periodic averaging perform similarly. At the same time, dynamic averaging requires up to an order of magnitude less communication to achieve it. Examining the cumulative communication over time in Figure~\ref{fig:conceptDriftExps}\subref{fig:conceptDriftComm}, one can see that dynamic averaging communicates more after each concept drift and decreases communication until the next drift. This indicates that dynamic averaging invests communication when it is most impactful and can thereby save a substantial amount of communication in between drifts.  
%

\vspace{-0.2cm}
\subsubsection{Case Study on Deep Driving:}\label{ssec:deepdriving}
After having studied dynamic averaging in contrast to periodic approaches and FedAvg on MNIST and a synthetic 
dataset with concept drifts, we analyze how the suggested protocol performs in the realistic application scenario of in-fleet training for autonomous driving introduced in Section~\ref{sec:intro}. 
%
One of the approaches in autonomous driving is direct steering control of a constantly moving car via a neural network that predicts a steering angle given an input from the front view camera. 
Since one network fully controls the car this approach is termed \textbf{deep driving}. Deep driving neural networks can be trained on a dataset generated by recording human driver control and corresponding frontal view~\citep{bojarski2016end,fernando2017going,pomerleau1989alvinn}. 
%

For our experiments we use a neural network architecture suggested for deep driving by~\citet{bojarski2016end}. The learners are evaluated by their driving ability following the qualitative evaluation made by \citet{bojarski2016end} or \citet{pomerleau1989alvinn} as well as techniques used in the automotive industry. For that, we developed a custom loss together with experts for autonomous driving that takes into account the time the vehicle drives on track and the frequency of crossing road sidelines. 

Figure~\ref{fig:deep_driving_performance} shows the measurements of the custom loss against the cumulative communication. The principal difference from the previous experiments is the evaluation of the resulting models without taking into account cumulative training loss. 
All the resulting models as well as baseline models were loaded to the simulator and driven with a constant speed. The plot shows that each periodic 
\begin{wrapfigure}{r}{0.5\textwidth}
	\includegraphics[width=6cm]{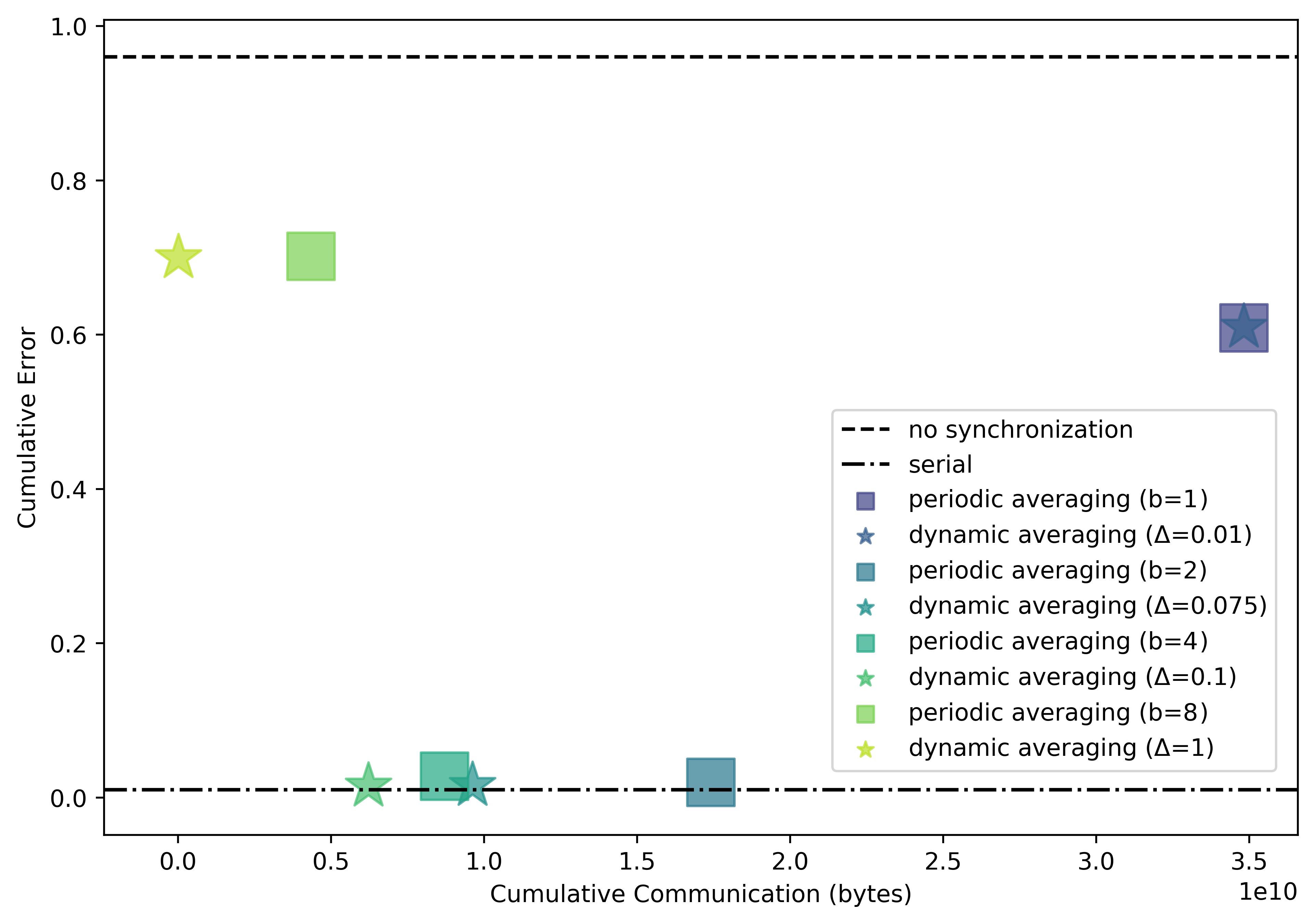}
	\caption{Performance in the terms of the custom loss for the models trained according to a set of communication protocols and baseline models.\vspace{-0.3cm}}
	\label{fig:deep_driving_performance}
\end{wrapfigure}
communication protocol can be outperformed by a dynamic protocol. 

Similar to our previous experiments, too little communication leads to bad performance, but for deep driving, very high communication ($\syncop_{b=1}$ and $\syncop_{\Delta=0.01}$) results in a bad performance as well. 
On the other hand, proper setups achieve performance similar to the performance of the serial model (e.g. dynamic averaging with $\Delta=0.1$ or $\Delta=0.3$). 
This raises the question, how much diversity is beneficial in-between averaging steps and how diverse models should be initialized. We discuss this question and other properties of dynamic averaging in the following section.

\section{Discussion}
\label{sec:discussion}
A popular class of parallel learning algorithms is based on stochastic gradient descent, both in convex and non-convex learning tasks. As for all gradient-based algorithms, the gradient computation can be parallelized `embarrassingly'~\citep{moler1986matrix} easily. For convex problems, the best so far known algorithm, in terms of predictive performance, in this class~\citep{shamir2014distributed} is the distributed mini-batch algorithm \citep{dekel/jmlr/2012}. 
For the non-convex problem of training (deep) neural networks,~\citet{mcmahan2017communication} have shown that periodic averaging performs similar to the mini-batch algorithm. Section~\ref{sec:theory} substantiates these results from a theoretical perspective. Sub-sampling learners in each synchronization allows to further reduce communication at the cost of a moderate loss in predictive performance. 

Note that averaging models, similar to distributed mini-batch training, requires a common architecture for all local models since the goal is to jointly train a single global model distributedly using observations from local data streams---which also sets it apart from ensemble methods.

For the convex case,~\citet{kamp2016kernels} have shown that dynamic averaging retains the performance of periodic averaging and certain serial learning algorithms (including SGD) with substantially less communication. Section~\ref{sec:theory} proves that these results are applicable to the non-convex case as well. Section~\ref{sec:exps} indicates that these results also hold in practice and that dynamic averaging indeed outperforms periodic averaging, both with and without sub-sampling of learners. 
This advantage is even amplified in the presence of concept drifts. Additionally, dynamic averaging is a black-box approach, i.e., it can be applied with arbitrary learning algorithms (see Appendix~\ref{app:ssec:black_box} for a comparison of using dynamic averaging with SGD, ADAM, and RMSprop). 

However, averaging models instead of gradients has the disadvantage of being susceptible to outliers. That is, without a bound on the quality of local models, their average can be arbitrarily bad~\citep{shamir2014distributed, kamp2017effective}. More robust approaches are computationally expensive, though, e.g., the geometric median~\citep{feng2017outlier}. Others are not directly applicable to non-convex problems, e.g., the Radon point~\citep{kamp2017effective}. Thus, it remains an open question whether robust methods can be applied to decentralized deep learning. 

Another open question is the choice of the divergence threshold $\Delta$ for dynamic averaging. The model divergence depends on the expected update steps (e.g., in the case of SGD on the expected norm of gradients and the learning rate), but the threshold is not intuitive to set. A good practice is to 
\begin{wrapfigure}{r}{0.5\textwidth}
	\includegraphics[width=0.5\textwidth]{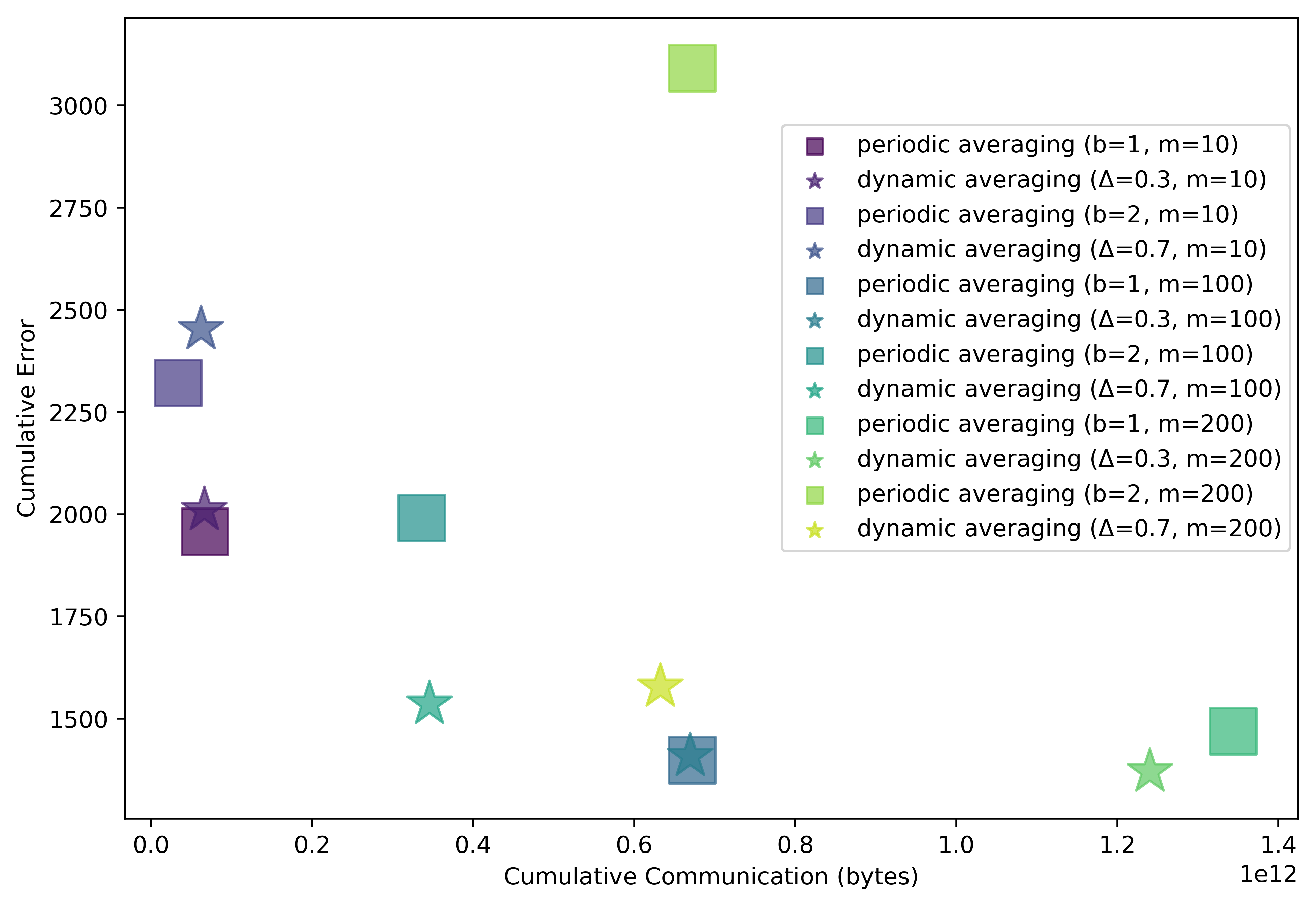}
	\centering
	\caption{
		Cumulative loss and cumulative communication of learning protocols for a different amount of learners. Training is performed on MNIST for $2$, $20$ and $40$ epochs for $m=10$, $m=100$, $m=200$ setups correspondingly.\vspace{-0.3cm}}
	\label{fig:10_100_200_performance}
\end{wrapfigure}
optimize the parameter for the desired trade-off between predictive performance and communication on a small subset of the data. It is an interesting question whether the parameter can also be adapted during the learning process in a theoretically sound way.

In dynamic averaging, the amount of communication not only depends on the actual divergence of models, but also on the probability of local violations. Since the local conditions can be violated without the actual divergence crossing the threshold, these false alarms lead to unnecessary communication. The more learners in the system, the higher the probability of such false alarms. In the worst case, though, dynamic averaging communicates as much as periodic averaging. Thus, it scales at least as well as current decentralized learning approaches~\citep{mcmahan2017communication,jiang2017collaborative}. Moreover, using a resolution strategy that tries to balance violations by communicating with just a small number of learners partially compensates for this problem. Indeed, experiments on the scalability of the approach show that dynamic averaging scales well with the number of learners (see Figure~\ref{fig:10_100_200_performance} and Appendix~\ref{app:ssec:scale_out} for details).

\begin{figure}[t]
	\centering
	\subfigure[static averaging protocols]{\label{fig:init_1}\centering\includegraphics[width=5.4cm]{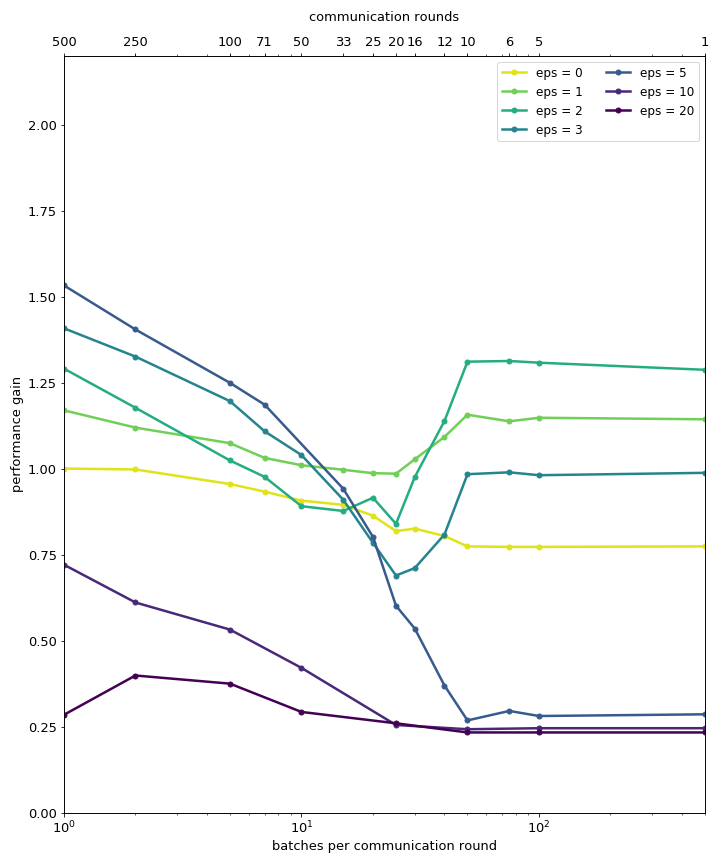}}
	\subfigure[dynamic averaging protocols]{\label{fig:init_2}\centering\includegraphics[width=5.4cm]{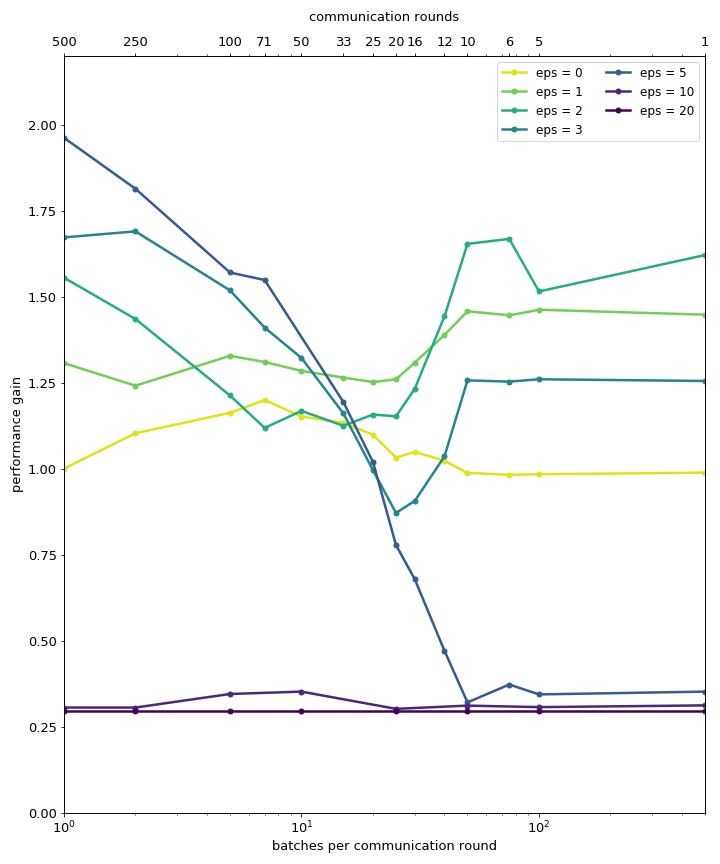}}
	\caption{Relative performances of averaged models on MNIST obtained from various heterogeneous model initializations parameterized by $\epsilon$ and various $b\in\N$. All averaged model performances are compared to an experiment with homogeneous model initializations ($\epsilon = 0$) and $b = 1$.\vspace{-0.3cm}}
	\label{fig:init}
\end{figure}
A general question when using averaging is how local models should be initialized. \citet{mcmahan2017communication} suggest using the same initialization for all local models and report that different initializations deteriorate the learning process when models are averaged only once at the end. Studying the transition from homogeneously initialized and converging model configurations to heterogeneously initialized and failing ones reveals that, surprisingly, for multiple rounds of averaging different initializations can indeed be beneficial. Figure~\ref{fig:init} shows the performances of dynamic and periodic averaging for different numbers of rounds of averaging and different levels of inhomogeneity in the initializations. The results confirm that for one round of averaging, strongly inhomogeneous initializations deteriorate the learning process, but for more frequent rounds of averaging mild inhomogeneity actually improves training. For large heterogeneities, however, model averaging fails as expected. This raises an interesting question about the regularizing effects of averaging and its potential advantages over serial learning in case of non-convex objectives.

\section{Conclusion}
\label{sec:conclusion}
In decentralized deep learning there is a natural trade-off between learning performance and communication. Averaging models periodically allows to achieve a high predictive performance with less communication compared to sharing data. The proposed dynamic averaging protocol achieves similarly high predictive performance yet requires substantially less communication. At the same time, it is adaptive to concept drifts. The method is theoretically sound, i.e., it retains the loss bounds of the underlying learning algorithm using an amount of communication that is bound by the hardness of the learning problem.

\subsubsection*{Acknowledgements}
This research has been supported by the Center of Competence Machine Learning Rhein-Ruhr (ML2R).

\small
\bibliographystyle{splncsnat}
\bibliography{bibliography}

\begin{thebibliography}{45}
\providecommand{\natexlab}[1]{#1}
\providecommand{\url}[1]{\texttt{#1}}
\providecommand{\urlprefix}{}

\bibitem[{Bojarski et~al.(2016)Bojarski, Del~Testa, Dworakowski, Firner, Flepp,
  Goyal, Jackel, Monfort, Muller, Zhang et~al.}]{bojarski2016end}
Bojarski, M., Del~Testa, D., Dworakowski, D., Firner, B., Flepp, B., Goyal, P.,
  Jackel, L.D., Monfort, M., Muller, U., Zhang, J., et~al.: End to end learning
  for self-driving cars.
\newblock CoRR abs/1604.07316 (2016)

\bibitem[{Boley et~al.(2013)Boley, Kamp, Keren, Schuster, and
  Sharfman}]{boley2013communication}
Boley, M., Kamp, M., Keren, D., Schuster, A., Sharfman, I.:
  Communication-efficient distributed online prediction using dynamic model
  synchronizations.
\newblock In: BD3@ VLDB. pp. 13--18 (2013)

\bibitem[{Bottou(1991)}]{bottou1991stochastic}
Bottou, L.: Stochastic gradient learning in neural networks.
\newblock Proceedings of Neuro-N{\i}mes 91(8), 0 (1991)

\bibitem[{Bshouty and Long(2012)}]{bshouty/ml/2012}
Bshouty, N.H., Long, P.M.: Linear classifiers are nearly optimal when hidden
  variables have diverse effects.
\newblock Machine Learning 86(2), 209--231 (2012)

\bibitem[{Chen et~al.(2016)Chen, Monga, Bengio, and
  Jozefowicz}]{chen2016revisiting}
Chen, J., Monga, R., Bengio, S., Jozefowicz, R.: Revisiting distributed
  synchronous {SGD}.
\newblock In: International Conference on Learning Representations Workshop
  Track (2016)

\bibitem[{Chollet et~al.(2015)}]{chollet2015keras}
Chollet, F., et~al.: Keras.
\newblock \url{https://keras.io} (2015)

\bibitem[{Dean et~al.(2012)Dean, Corrado, Monga, Chen, Devin, Quoc, Mao,
  Ranzato, Senior, Tucker, Yang, and Ng}]{dean2012large}
Dean, J., Corrado, G.S., Monga, R., Chen, K., Devin, M., Quoc, V.L., Mao, M.Z.,
  Ranzato, M., Senior, A., Tucker, P., Yang, K., Ng, A.Y.: Large scale
  distributed deep networks.
\newblock In: Advances in Neural Information Processing Systems. pp. 1223--1231
  (2012)

\bibitem[{Dekel et~al.(2012)Dekel, Gilad-Bachrach, Shamir, and
  Xiao}]{dekel/jmlr/2012}
Dekel, O., Gilad-Bachrach, R., Shamir, O., Xiao, L.: Optimal distributed online
  prediction using mini-batches.
\newblock Journal of Machine Learning Research 13, 165--202 (2012)

\bibitem[{Feng et~al.(2017)Feng, Xu, and Mannor}]{feng2017outlier}
Feng, J., Xu, H., Mannor, S.: Outlier robust online learning.
\newblock CoRR abs/1701.00251 (2017)

\bibitem[{Fernando et~al.(2017)Fernando, Denman, Sridharan, and
  Fookes}]{fernando2017going}
Fernando, T., Denman, S., Sridharan, S., Fookes, C.: Going deeper: Autonomous
  steering with neural memory networks.
\newblock In: Proceedings of the IEEE Conference on Computer Vision and Pattern
  Recognition. pp. 214--221 (2017)

\bibitem[{Gabel et~al.(2014)Gabel, Keren, and Schuster}]{gabel/IPDPS/2014}
Gabel, M., Keren, D., Schuster, A.: Communication-efficient distributed
  variance monitoring and outlier detection for multivariate time series.
\newblock In: Proceedings of the 28th International Parallel and Distributed
  Processing Symposium. pp. 37--47. IEEE (2014)

\bibitem[{Giatrakos et~al.(2012)Giatrakos, Deligiannakis, Garofalakis,
  Sharfman, and Schuster}]{giatrakos2012prediction}
Giatrakos, N., Deligiannakis, A., Garofalakis, M., Sharfman, I., Schuster, A.:
  Prediction-based geometric monitoring over distributed data streams.
\newblock In: Proceedings of the 2012 ACM SIGMOD International Conference on
  Management of Data. pp. 265--276. ACM (2012)

\bibitem[{Jiang et~al.(2017)Jiang, Balu, Hegde, and
  Sarkar}]{jiang2017collaborative}
Jiang, Z., Balu, A., Hegde, C., Sarkar, S.: Collaborative deep learning in
  fixed topology networks.
\newblock In: Advances in Neural Information Processing Systems. pp. 5904--5914
  (2017)

\bibitem[{Kamp et~al.(2014{\natexlab{a}})Kamp, Boley, Keren, Schuster, and
  Sharfman}]{kamp2014communication}
Kamp, M., Boley, M., Keren, D., Schuster, A., Sharfman, I.:
  Communication-efficient distributed online prediction by dynamic model
  synchronization.
\newblock In: Machine Learning and Knowledge Discovery in Databases, pp.
  623--639. Springer (2014{\natexlab{a}})

\bibitem[{Kamp et~al.(2017)Kamp, Boley, Missura, and
  G{\"a}rtner}]{kamp2017effective}
Kamp, M., Boley, M., Missura, O., G{\"a}rtner, T.: Effective parallelisation
  for machine learning.
\newblock In: Advances in Neural Information Processing Systems. pp. 6480--6491
  (2017)

\bibitem[{Kamp et~al.(2014{\natexlab{b}})Kamp, Boley, Mock, Keren, Schuster,
  and Sharfman}]{kampAdaptive2014}
Kamp, M., Boley, M., Mock, M., Keren, D., Schuster, A., Sharfman, I.: Adaptive
  communication bounds for distributed online learning.
\newblock In: 7th NIPS Workshop on Optimization for Machine Learning
  (2014{\natexlab{b}})

\bibitem[{Kamp et~al.(2016)Kamp, Bothe, Boley, and Mock}]{kamp2016kernels}
Kamp, M., Bothe, S., Boley, M., Mock, M.: Communication-efficient distributed
  online learning with kernels.
\newblock In: Machine Learning and Knowledge Discovery in Databases. pp.
  805{--}819. Springer (2016)

\bibitem[{Keren et~al.(2014)Keren, Sagy, Abboud, Ben-David, Schuster, Sharfman,
  and Deligiannakis}]{keren2014geometric}
Keren, D., Sagy, G., Abboud, A., Ben-David, D., Schuster, A., Sharfman, I.,
  Deligiannakis, A.: Geometric monitoring of heterogeneous streams.
\newblock IEEE Transactions on Knowledge and Data Engineering 26(8), 1890--1903
  (2014)

\bibitem[{Keren et~al.(2012)Keren, Sharfman, Schuster, and
  Livne}]{keren2012shape}
Keren, D., Sharfman, I., Schuster, A., Livne, A.: Shape sensitive geometric
  monitoring.
\newblock IEEE Transactions on Knowledge and Data Engineering 24(8), 1520--1535
  (2012)

\bibitem[{Keskar et~al.(2017)Keskar, Mudigere, Nocedal, Smelyanskiy, and
  Tang}]{keskar2017on}
Keskar, N.S., Mudigere, D., Nocedal, J., Smelyanskiy, M., Tang, P.T.P.: On
  large-batch training for deep learning: Generalization gap and sharp
  minima​.
\newblock In: International Conference on Learning Representations (2017)

\bibitem[{Kingma and Ba(2014)}]{kingma2014adam}
Kingma, D.P., Ba, J.: Adam: A method for stochastic optimization.
\newblock In: Proceedings of the 3rd International Conference on Learning
  Representations (2014)

\bibitem[{Lazerson et~al.(2015)Lazerson, Sharfman, Keren, Schuster,
  Garofalakis, and Samoladas}]{lazerson2015monitoring}
Lazerson, A., Sharfman, I., Keren, D., Schuster, A., Garofalakis, M.,
  Samoladas, V.: Monitoring distributed streams using convex decompositions.
\newblock Proceedings of the VLDB Endowment 8(5), 545--556 (2015)

\bibitem[{LeCun(1998)}]{lecun1998mnist}
LeCun, Y.: The mnist database of handwritten digits.
\newblock http://yann. lecun. com/exdb/mnist/  (1998)

\bibitem[{Mcdonald et~al.(2009)Mcdonald, Mohri, Silberman, Walker, and
  Mann}]{mcdonald2009efficient}
Mcdonald, R., Mohri, M., Silberman, N., Walker, D., Mann, G.S.: Efficient
  large-scale distributed training of conditional maximum entropy models.
\newblock In: Advances in Neural Information Processing Systems. pp. 1231--1239
  (2009)

\bibitem[{McMahan et~al.(2017)McMahan, Moore, Ramage, Hampson, and
  y~Arcas}]{mcmahan2017communication}
McMahan, B., Moore, E., Ramage, D., Hampson, S., y~Arcas, B.A.:
  Communication-efficient learning of deep networks from decentralized data.
\newblock In: Artificial Intelligence and Statistics. pp. 1273--1282 (2017)

\bibitem[{McMahan et~al.(2018)McMahan, Ramage, Talwar, and
  Zhang}]{mcmahan2018Learning}
McMahan, B., Ramage, D., Talwar, K., Zhang, L.: Learning differentially private
  recurrent language models.
\newblock In: International Conference on Learning Representations (2018)

\bibitem[{Moler(1986)}]{moler1986matrix}
Moler, C.: Matrix computation on distributed memory multiprocessors.
\newblock Hypercube Multiprocessors 86(181-195), 31 (1986)

\bibitem[{Nesterov(2013)}]{nesterov2013introductory}
Nesterov, Y.: Introductory lectures on convex optimization: A basic course,
  vol.~87.
\newblock Springer Science \& Business Media (2013)

\bibitem[{Nguyen and Hein(2017)}]{nguyen2017loss}
Nguyen, Q., Hein, M.: The loss surface of deep and wide neural networks.
\newblock In: International Conference on Machine Learning. pp. 2603--2612
  (2017)

\bibitem[{Pomerleau(1989)}]{pomerleau1989alvinn}
Pomerleau, D.A.: Alvinn: An autonomous land vehicle in a neural network.
\newblock In: Advances in Neural Information Processing Systems. pp. 305--313
  (1989)

\bibitem[{Sagy et~al.(2010)Sagy, Keren, Sharfman, and
  Schuster}]{sagy2010distributed}
Sagy, G., Keren, D., Sharfman, I., Schuster, A.: Distributed threshold querying
  of general functions by a difference of monotonic representation.
\newblock Proceedings of the VLDB Endowment 4(2), 46--57 (2010)

\bibitem[{Sanghavi et~al.(2017)Sanghavi, Ward, and White}]{sanghavi2017local}
Sanghavi, S., Ward, R., White, C.D.: The local convexity of solving systems of
  quadratic equations.
\newblock Results in Mathematics 71(3-4), 569--608 (2017)

\bibitem[{Shamir(2016)}]{shamir2016without}
Shamir, O.: Without-replacement sampling for stochastic gradient methods.
\newblock In: Advances in Neural Information Processing Systems. pp. 46--54
  (2016)

\bibitem[{Shamir et~al.(2014)Shamir, Srebro, and Zhang}]{shamir2014distributed}
Shamir, O., Srebro, N., Zhang, T.: Communication-efficient distributed
  optimization using an approximate newton-type method.
\newblock In: International Conference on Machine Learning. pp. 1000--1008
  (2014)

\bibitem[{Sharfman et~al.(2007)Sharfman, Schuster, and
  Keren}]{sharfman/tods/2007}
Sharfman, I., Schuster, A., Keren, D.: A geometric approach to monitoring
  threshold functions over distributed data streams.
\newblock Transactions on Database Systems 32(4) (2007)

\bibitem[{Sharfman et~al.(2008)Sharfman, Schuster, and
  Keren}]{sharfman2008shape}
Sharfman, I., Schuster, A., Keren, D.: Shape sensitive geometric monitoring.
\newblock In: Proceedings of the ACM SIGMOD-SIGACT-SIGART Symposium on
  Principles of Database Systems. pp. 301--310. ACM (2008)

\bibitem[{Smith et~al.(2017)Smith, Chiang, Sanjabi, and
  Talwalkar}]{smith2017federated}
Smith, V., Chiang, C.K., Sanjabi, M., Talwalkar, A.S.: Federated multi-task
  learning.
\newblock In: Advances in Neural Information Processing Systems. pp. 4424--4434
  (2017)

\bibitem[{Tieleman and Hinton(2012)}]{tieleman2012lecture}
Tieleman, T., Hinton, G.: Lecture 6.5-rmsprop: Divide the gradient by a running
  average of its recent magnitude.
\newblock COURSERA: Neural networks for machine learning 4(2), 26--31 (2012)

\bibitem[{Verner et~al.(2011)Verner, Schuster, and
  Silberstein}]{verner2011processing}
Verner, U., Schuster, A., Silberstein, M.: Processing data streams with hard
  real-time constraints on heterogeneous systems.
\newblock In: Proceedings of the International Conference on Supercomputing.
  pp. 120--129. ACM (2011)

\bibitem[{Wang and Srebro(2017)}]{Wang2017StochasticNO}
Wang, W., Srebro, N.: Stochastic nonconvex optimization with large minibatches.
\newblock CoRR abs/1709.08728 (2017)

\bibitem[{Xavier~Glorot(2010)}]{Glorot10understandingthe}
Xavier~Glorot, Y.B.: Understanding the difficulty of training deep feedforward
  neural networks.
\newblock In: Proceedings of the International Conference on Artificial
  Intelligence and Statistics (2010)

\bibitem[{Zhang et~al.(2017)Zhang, Bengio, Hardt, Recht, and
  Vinyals}]{zhang2017understanding}
Zhang, C., Bengio, S., Hardt, M., Recht, B., Vinyals, O.: Understanding deep
  learning requires rethinking generalization.
\newblock In: Proceedings of the International Conference on Learning
  Representations (2017)

\bibitem[{Zhang et~al.(2015)Zhang, Choromanska, and LeCun}]{zhang2015deep}
Zhang, S., Choromanska, A.E., LeCun, Y.: Deep learning with elastic averaging
  sgd.
\newblock In: Advances in Neural Information Processing Systems. pp. 685--693
  (2015)

\bibitem[{Zhang et~al.(2012)Zhang, Wainwright, and
  Duchi}]{zhang2012communication}
Zhang, Y., Wainwright, M.J., Duchi, J.C.: Communication-efficient algorithms
  for statistical optimization.
\newblock In: Advances in Neural Information Processing Systems. pp. 1502--1510
  (2012)

\bibitem[{Zinkevich et~al.(2010)Zinkevich, Weimer, Smola, and
  Li}]{zinkevich/nips/2010}
Zinkevich, M., Weimer, M., Smola, A.J., Li, L.: Parallelized stochastic
  gradient descent.
\newblock In: Advances in Neural Information Processing Systems. pp. 2595--2603
  (2010)

\end{thebibliography}

\newpage
\newpage
\appendix
\label{sec:appendix}

\section{Details on the Empirical Evaluation}
\label{app:sec:exps}
In this section we provide additional details on the experimental setup and the empirical evaluation presented in Section~\ref{sec:exps}.

\subsection{Experiments on MNIST}
\label{app:sec:exps:mnist}

For the experiments with MNIST dataset we have chosen a neural network with two convolutional layers, one max pooling layer, and two dense layers with final softmax activation. The overview of the network layers and the amount of parameters are presented in Table~\ref{tab:mnist_network}. To make the results comparable to those of~\citet{mcmahan2017communication} we chose $\totalLearners=100$ learners, each training a network with roughly $10^6$ weights. This distributed training is performed for three setups of both periodic and dynamic averaging (summarized in Table~\ref{mnist100_table}), as well as for the two baselines, serial and nosync. 

The employed loss function for training is categorical crossentropy and the learning algorithm is mini-batch SGD (as it is implemented in the Keras library~\citep{chollet2015keras}).
\begin{table}[h]
\begin{center}
    \medskip
    \begin{tabular}{ cccp{5cm} }
    \hline
    \textbf{Layer Type} & \textbf{Output Shape} & \textbf{\#Weights} \\ \hline
    Conv2D & (26, 26, 32) & 320 \\ 
    Conv2D & (24, 24, 64) & 18\,496 \\ 
    MaxPooling2D & (12, 12, 64) & 0 \\ 
    Dropout & (12, 12, 64) & 0 \\ 
    Flatten & (9216) & 0 \\ 
    Dense & (128) & 1\,179\,776 \\ 
    Dropout & (128) & 0 \\ 
    Dense & (10) & 1\,290 \\ \hline
    \multicolumn{2}{ c }{\textbf{Total}} & 1\,199\,882 \\ 
    \end{tabular}
    \caption{The architecture of the Convolutional Neural Network used for MNIST dataset. Printout of the parameters made via Keras library.}
    \label{tab:mnist_network}
\end{center}
\end{table}

The parameters of the learning algorithm are optimized on a separate dataset. For the serial learner, the optimal parameters are a batch size of $B=10$ samples and a learning rate of $\eta=0.1$. With these parameters the serial model reaches an accuracy of $0.99$ (calculated as an incremental value during the online training time) which is competitive on the MNIST dataset. For the distributed learners, the batch size is again $B=10$, but the learning rate is higher with $\eta=0.25$. 

For the distributed setups, each of the $\totalLearners=100$ learners observes $14000$ examples. At the same time, the serial baseline observes the same amount of examples as the entire distributed learning system, i.e., $100 \cdot 14 000$ samples of the $70 000$ data points. Thus, while each local learners observes only a fraction of the dataset, the serial learner trains for $20$ epochs over the data. This explains the substantially lower cumulative loss of the serial baseline compared to the distributed protocols. However, the serial baseline requires centralization of all training data which is often infeasible in decentralized application scenarios. Moreover, this data needs to be centrally processed, resulting in a runtime that is roughly $\totalLearners$-times higher than that of the distributed approaches.
\begin{table}[h]
	\begin{center}
		\begin{tabular}{ ll }
			\multicolumn{2}{ c }{\textbf{protocol configurations}} \\
			\textbf{type} & \textbf{parameters} \\ \hline
			\multirow{3}{*}{periodic protocol ($\syncop_b$)} & $b=1$ \\ 
			 & $b=2$ \\
			 & $b=4$ \\ 
			 \\
			\multirow{3}{*}{dynamic protocol ($\syncop_{\Delta,b}$)} &  $b = 1, \Delta = 0.3$ \\
			& $b = 1, \Delta = 0.7$ \\
			& $b = 1, \Delta = 1.0$ \\ 
		\end{tabular}
	\end{center}
	\caption{Overview of the different communication protocol configurations used for MNIST experiment. Except for protocol parameters $\Delta$ and $b$, all other parameters are kept constant between configurations.}
	\label{mnist100_table}
\end{table}

\begin{figure}[h]
	\centering
	\subfigure{
		\label{fig:communication_selected}
		\includegraphics[height=4cm]{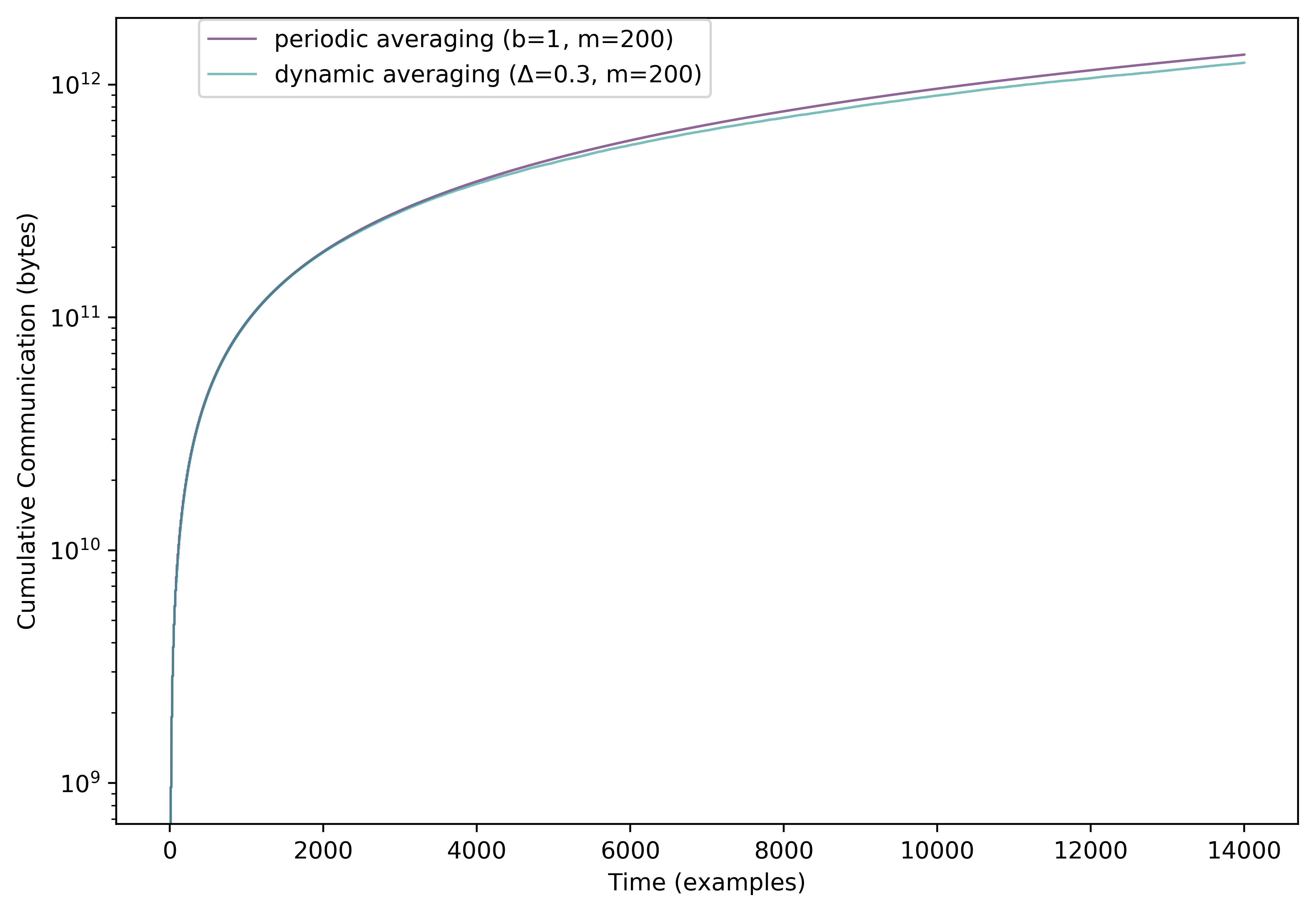}}
	\hfill
	\subfigure{
		\label{fig:loss_selected}
		\includegraphics[height=4cm]{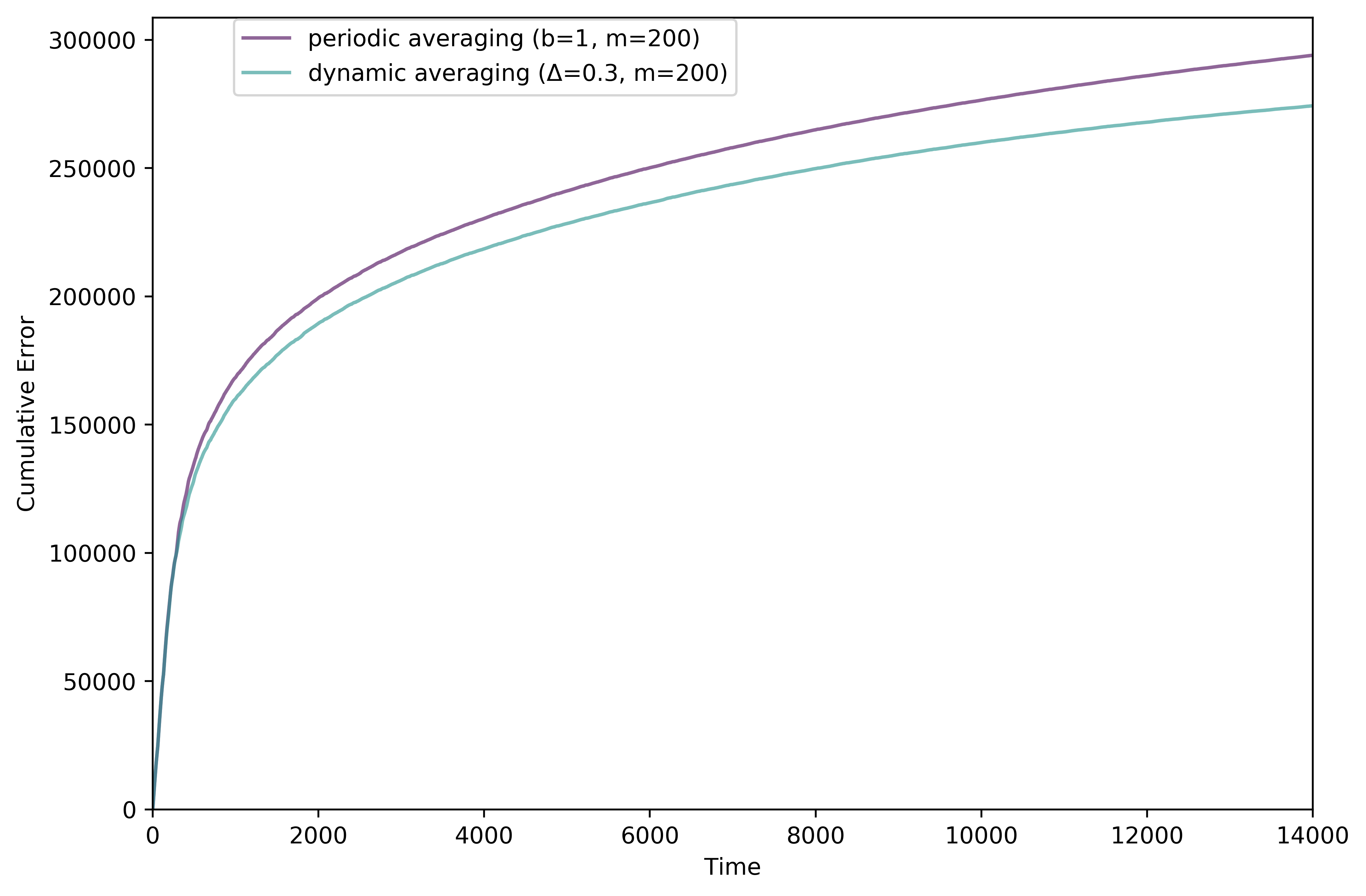}}
	\caption{
		\subref{fig:communication_selected} The cumulative communication development during the training on MNIST dataset for $40$ epochs using dynamic averaging ($\Delta=0.3, b=1$) and periodic averaging ($b=1$).
		\subref{fig:loss_selected} The cumulative loss development during the training on MNIST dataset for $40$ epochs with one dynamic and one periodic protocols.}
	\label{fig:loss_communcation_tradeoff_selected}
\end{figure} 

As a solution, periodic averaging allows to sacrifice predictive performance to allow decentralized computation in such a way that the more the protocol communicates, the more performance is maintained. At the same time, for each setup of periodic averaging, a parameter $\Delta$ of dynamic averaging can be found such that it has similar predictive performance but requires less communication.
As an example of this, in Figure~\ref{fig:loss_communcation_tradeoff_selected} we present the cumulative communication and error over time of two similarly performing protocols, using $\syncop_{\Delta=0.3}$, respectively $\syncop_{b=1}$. During first $500$ examples, when both protocols suffer a lot of loss, dynamic averaging invests more communication, resulting in faster convergence and thus a lower increase in cumulative error. After the first $3000$ examples, dynamic averaging reduces the amount of communication without suffering more loss. This leads eventually to both smaller cumulative error and cumulative communication.

\subsection{Comparison with FedAvg}
\label{app:sec:exps:fedsgd}
In Table~\ref{fedsgd_table}, we provide an overview of the dynamic averaging and FedAvg protocol configurations used in Section~\ref{ssec:fedsgd}. The experiment runs with $m=30$ learners, mini batches of size $B=10$ and $b=5$. The FedAvg parameter $C\in (0,1]$~\cite{mcmahan2017communication} is the fraction of learners involved in a particular model synchronization and their parameter $E$, the number of local batches, corresponds to the parameter $b$. Each learner is trained on $8000$ training examples. Accuracy is calculated on the last $100$ training examples.
\begin{table}[ht]
	\begin{center}
		\begin{tabular}{ ll }
			\multicolumn{2}{ c }{\textbf{protocol configurations}} \\
			\textbf{type} & \textbf{parameters} \\ \hline
			periodic protocol ($\syncop_{b}$) & $b=5$ \\ 
			\\
			\multirow{4}{*}{dynamic protocol ($\syncop_{\Delta,b}$)} & $b=5, \Delta = 0.1$ \\
			& $b=5, \Delta = 0.2$ \\
			& $b=5, \Delta = 0.4$ \\
			& $b=5, \Delta = 0.6$ \\
			& $b=5, \Delta = 0.8$ \\ 
			\\
			\multirow{3}{*}{FedAvg ($\syncop_{\rm{FedAvg}}$)} & $b=5, C = 0.3$ \\
			& $b=5, C = 0.5$ \\
			& $b=5, C = 0.7$ \\ 
		\end{tabular}
	\end{center}
	\caption{Overview of the different communication protocol configurations for the comparison with FedAvg. Except for protocol parameters $\Delta$ and $C$, all other parameters are kept constant between configurations.}
	\label{fedsgd_table}
\end{table}
\begin{figure}[h]
\centering
\subfigure{\label{fig:fedsgd_errorCom_1_app}\includegraphics[width=6cm]{images/MNIST_CNN_fedsgd/comparison_fedsgd_communication.png}}
\hfill
\subfigure{\label{fig:fedsgd_errorCom_2_app}\includegraphics[width=6cm]{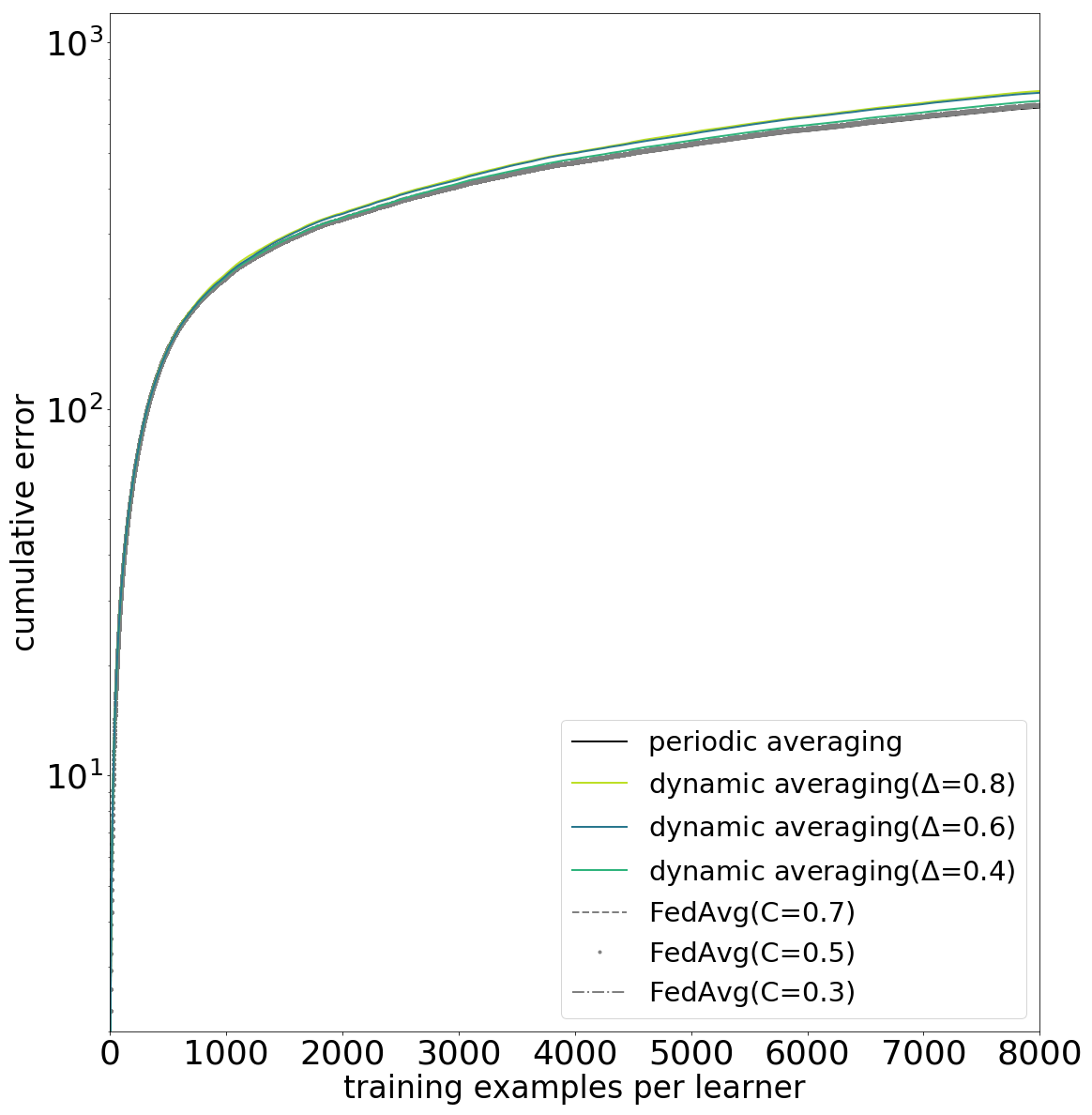}}
\caption{Evolution of \subref{fig:fedsgd_errorCom_1_app} cumulative communication and \subref{fig:fedsgd_errorCom_2_app} cumulative error during training for different dynamic averaging and FedAvg protocols.}
\label{fig:fedsgd_errorCom_app}
\end{figure}

\begin{figure}[h]
\centering
\subfigure{\label{fig:errorCom_comparison_periodicAveraging}\includegraphics[width=6cm]{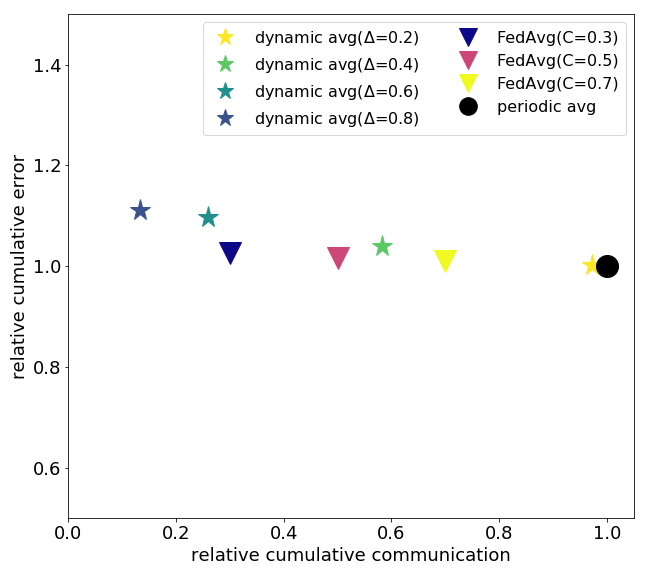}}
\hfill
\subfigure{\label{fig:accCom_comparison_periodicAveraging}\includegraphics[width=6cm]{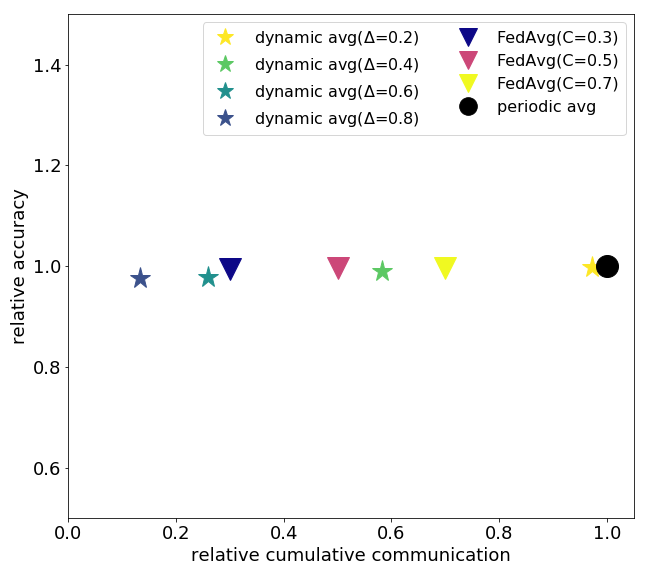}}
\caption{Evaluation of the dynamic averaging and FedAvg protocols relative to periodic averaging \subref{fig:errorCom_comparison_periodicAveraging} in terms of cumulative error and communication, \subref{fig:accCom_comparison_periodicAveraging} in terms of accuracy and communication.}
\label{fig:errorAccCom_comparison_periodicAveraging}
\end{figure}

The loss cumulated by $\syncop_{\Delta=0.6}$ and $\syncop_{\Delta=0.8}$ during training is only very slightly larger than those of the other $\dynavg$ and $\syncop_{\rm{FedAvg}}$. Figure~\ref{fig:fedsgd_errorCom_app} illustrates this trade-off in greater detail, showing the development of cumulative communication and cumulative loss over time. Both kinds of protocols lead to a significant reduction of communication in comparison to $\syncop_{b=5}$ (see Figure~\ref{fig:errorAccCom_comparison_periodicAveraging}), in the case of $\syncop_{\Delta=0.8}$ even more than 80\%. Both, $\dynavg$ and $\syncop_{\rm{FedAvg}}$, go along with slight increases of the cumulative loss (Figure~ \ref{fig:errorAccCom_comparison_periodicAveraging}\subref{fig:errorCom_comparison_periodicAveraging}) and slight decreases of model accuracy (Figure~\ref{fig:errorAccCom_comparison_periodicAveraging}\subref{fig:accCom_comparison_periodicAveraging}). 

\subsection{Adaptivity to Concept Drift}
\label{app:sec:exps:conceptDrift}
To assess the adaptivity of dynamic averaging, we compare it to periodic averaging on a synthetic binary classification dataset with samples from $\R^d$, with $d=50$, generated using a random graphical model~\citep{bshouty/ml/2012}. Concept drifts are triggered at random with a probability of $0.001$. A concept drift is simulated by generating a new graphical model.
\begin{table}[h]
	\begin{center}
		\medskip
		\begin{tabular}{ cccp{5cm} }
			\hline
			\textbf{Layer Type} & \textbf{Output Shape} & \textbf{\#Weights} \\ \hline
			Dense & (256) & 13\,056 \\ 
			Dropout & (256) & 0 \\ 
			Dense & (64) & 16\,448 \\ 
			Dense & (1) & 65 \\ \hline
			\multicolumn{2}{ c }{\textbf{Total}} & 29\,569 \\ 
		\end{tabular}
		\caption{The architecture of the fully connected network used for the concept drift experiment.}
		\label{tab:bshouty_network}
	\end{center}
\end{table}
Table \ref{tab:bshouty_network} provides an overview of the network layers and the amount of neurons used.
\begin{table}[h]
	\begin{center}
		\begin{tabular}{ ll }
			\multicolumn{2}{ c }{\textbf{protocol configurations}} \\
			\textbf{name} & \textbf{parameters}\\
			\hline
			\multirow{3}{*}{periodic protocol $\syncop_{b}$} & $b=1$ \\ 
			& $b=2$ \\
			& $b=4$ \\ 
			\\
			\multirow{3}{*}{dynamic protocol $\syncop_{\Delta,b}$} & $b=1, \Delta = 0.3$ \\
			& $b=1, \Delta = 0.7$ \\
			& $b=1, \Delta = 1.0$ \\ 
		\end{tabular}
	\end{center}
	\caption{Overview of the different communication protocol configurations for the analysis of concept drift. Except for protocol parameters $\Delta$ and $b$, all other parameters are kept constant between configurations.}
	\label{conceptDrift_table}
\end{table}
\begin{figure}[h]
\centering
\subfigure[cumulative loss]{\label{fig:conceptDriftError_app}\includegraphics[height=4cm]{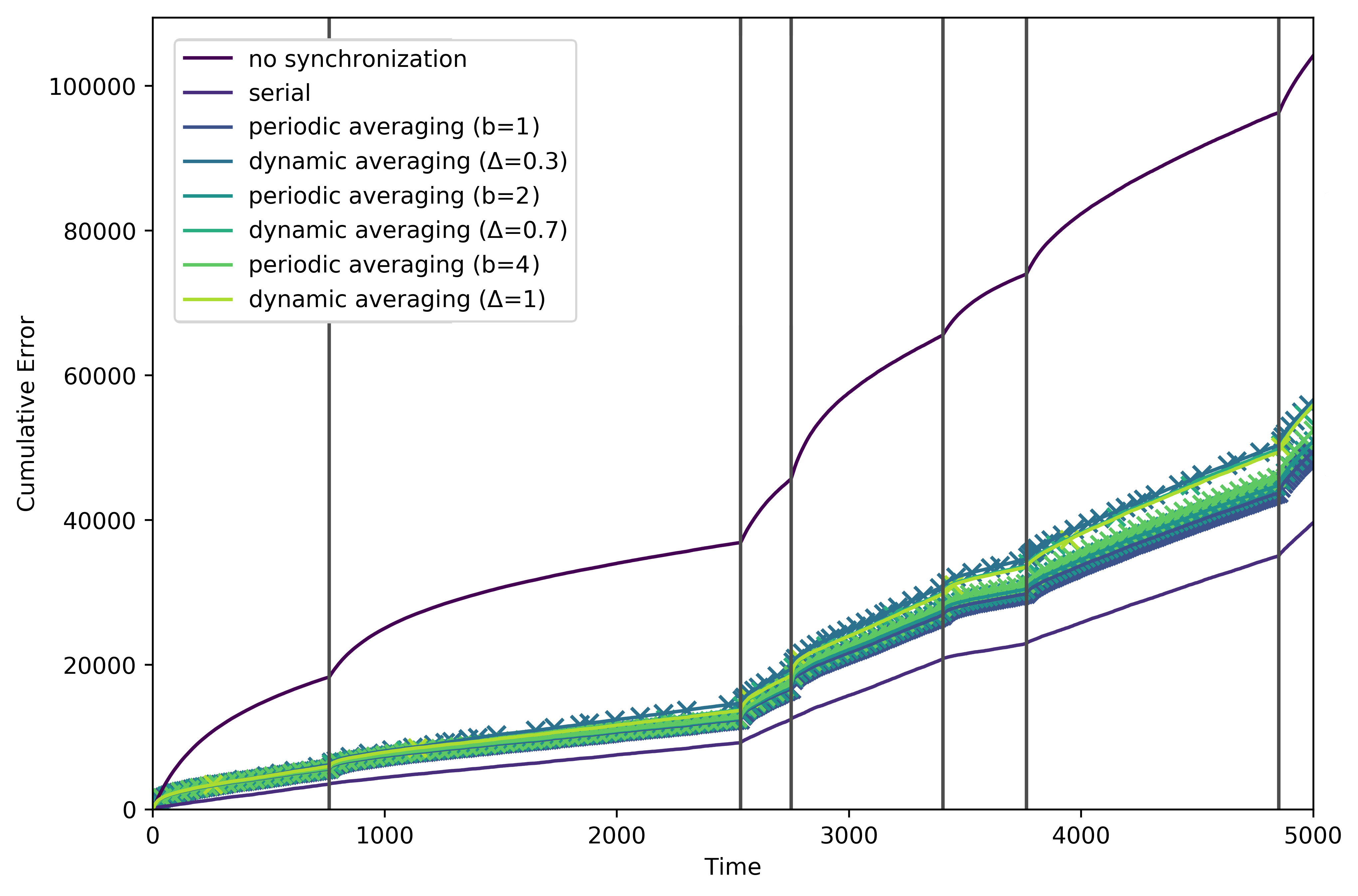}}
\hfill
\subfigure[cumulative communication]{\label{fig:conceptDriftComm_app}\includegraphics[height=4cm]{images/BSHOUTY_conceptDrift/communication_time.png}}
\caption{Development of cumulative loss and communication during training on $5000$ examples per node from a synthetic dataset with concept drifts (indicated by vertical lines).}
\label{fig:loss_communcation_tradeoff_app}
\end{figure}
Figure~\ref{fig:loss_communcation_tradeoff_app}, that shows the development of cumulative loss and communication over time, confirms that dynamic averaging adapts the amount of communication: right after a concept drift the number of synchronizations (indicated by a cross mark) is high and decreases as soon as the instantaneous loss of the learners---i.e., observable as the growth of the cumulative error---decreases.

\subsection{Deep Driving Experiments}
\label{app:sec:exps:dd}
Deep driving is an example for an autonomous driving functionality that can be learned in-fleet. That is, every vehicle continuously trains a local model based on its observations in shadow mode. This requires to infer the correct output locally for training. For that, the correct output for training is inferred from a human driver by mimicking his driving behavior using a frontal view camera as input~\citep{bojarski2016end,fernando2017going,pomerleau1989alvinn}. 

Using only vehicles in a specific region, the data seen by an individual local learner has in good approximation a low variability and heterogeneity, because people driving cars tend to stay close to their base. Thus, data from cars 
from a similar region can be assumed to be fairly homogeneously distributed. 

In order to mimic this scenario for our experiments, we record human driving behavior in a simulation for multiple drivers on a single track.
However, data of cars from different base locations, collected at different times or in different cars will underlie different (local) approximations of the actual distribution. This actual distribution has in contrast to its local approximations a large variability and heterogeneity, even if one considers only the minor set of tasks solved by machine learning. Thus, for in-fleet training over various regions the data cannot be assumed iid anymore, but empirical results~\citep{mcmahan2017communication} and recent extensions to the federated learning~\citep{smith2017federated} suggest that the approach is capable of handling non-iid data. Analyzing these and other challenges for in-fleet learning is an interesting direction for future work.

In the following, we provide details on the experimental setup.
The architecture of the layers of the deep driving network can be seen in Table \ref{tab:dd_network}.
\begin{table}[ht]
	\begin{center}
		\medskip
		\begin{tabular}{ cccp{5cm} }
			\hline
			\textbf{Layer Type} & \textbf{Output Shape} & \textbf{\#Weights} \\ \hline
			Conv2D & (32, 158, 24) & 1\,824 \\ 
			Conv2D & (14, 77, 36) & 21\,636 \\ 
			Conv2D & (5, 37, 48) & 43\,248 \\ 
			Conv2D & (3, 35, 64) & 27\,712 \\ 
			Conv2D & (1, 33, 64) & 36\,928 \\ 
			Flatten & (2\,112) & 0 \\ 
			Dense & (100) & 211\,300 \\ 
			Dense & (50) & 5\,050 \\ 
			Dense & (10) & 510 \\ 
			Dense & (1) & 11 \\ \hline
			\multicolumn{2}{ c }{\textbf{Total}} & 348\,219 \\ 
		\end{tabular}
		\caption{The architecture of the Convolutional Neural Network used for deep driving. Printout of the parameters made via Keras library.}
		\label{tab:dd_network}
	\end{center}
\end{table}
The employed loss function is squared error as it is implemented in Keras ``mean\_squared\_error''. 
The network is optimized using mini-batch SGD.
The experiments were run with training batch size of $B=10$ and learning rate $\eta=0.1$ both for the baselines and local learners. 
The input to the network is the front camera view from a car driven in a simulator\footnote{\url{https://github.com/udacity/self-driving-car-sim}}.
The output of the network is a steering angle that allows to control the car in the autonomous mode in the simulator when the speed is kept on a constant level.

During the evaluation a trained model has been loaded to drive the car in the autonomous regime in the simulator. The time that the model is able to control the car without going off the road or crashing is measured. The longest time $t_{max}$ is the time for the model that is able to keep going for $2$ laps on the track or the maximum time of all the models in one experiment. Also the amount of times the car has touched the sideline of the road is counted together with the time duration while car is still on the sideline $t_{line}$. Then a frequency $c$ of sideline crossings is calculated in a form $\frac{\# crossings}{t}$, where $t$ is the time before going off road or crash, and the maximal frequency among all the models is assigned to $c_{max}$. The overall formula for the custom loss is:
\[
L_{dd} = \lambda \frac{t_{max} - t}{t_{max}} + \mu  \frac{c}{c_{max}} + (1 - \mu - \lambda)  \frac{t_{line}}{t}
\]
\begin{wrapfigure}{r}{0.5\textwidth}
	\includegraphics[width=0.5\textwidth]{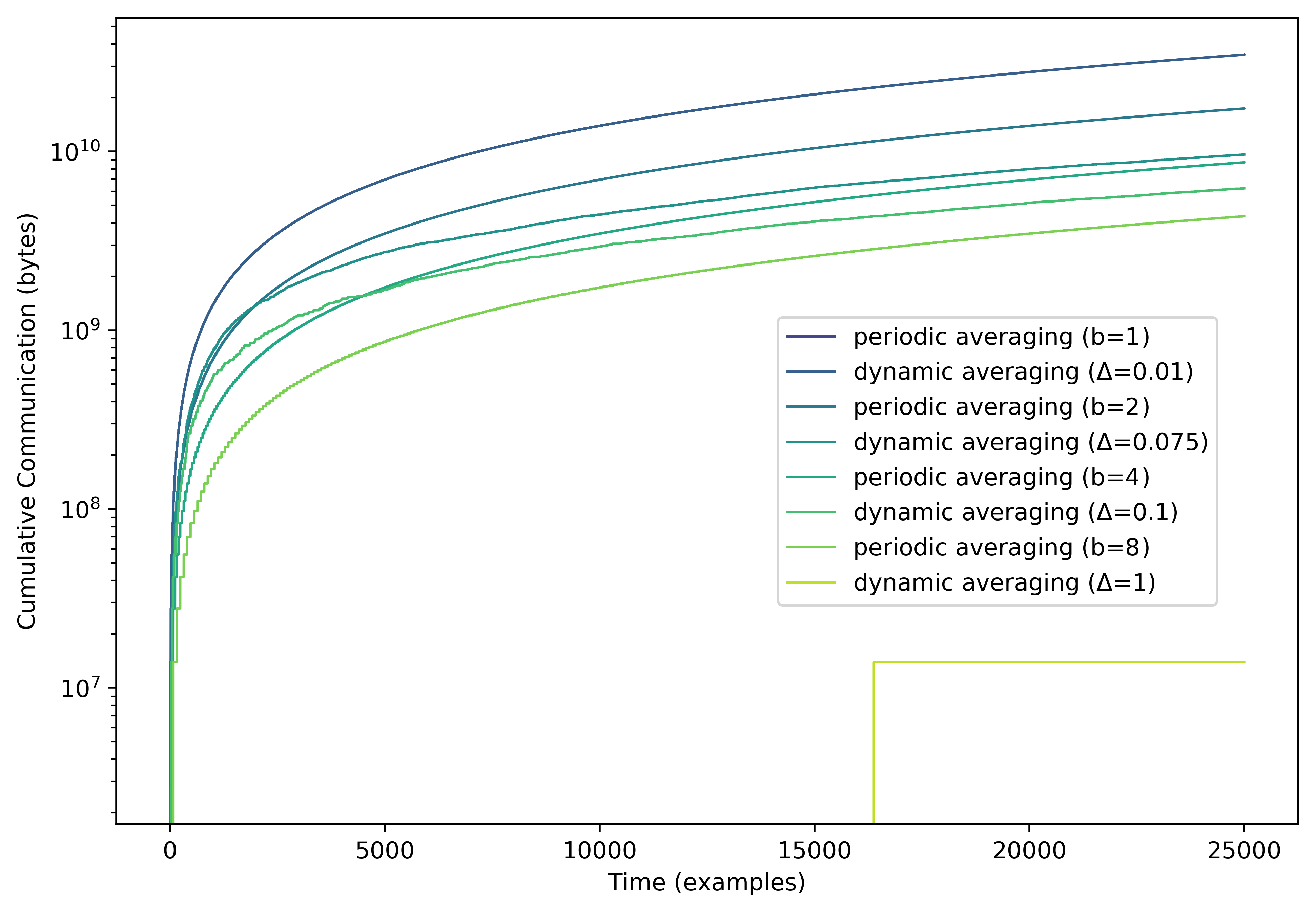}
	\caption{Evaluation of cumulative communication for the deep driving experiment.}
	\label{fig:deep_driving_comm}
\end{wrapfigure}
where $t$ is the time that the model is able to drive on the road, $\lambda, \mu \in [0;1]$ are weighting coefficients.
For the experiment $\lambda=0.8$ and $\mu=0.15$ were used. The amount of examples shown to each of the $m=10$ learners is $25000$, i.e., the dataset was shown to the serial learner approximately $5$ times (the overall size of the dataset is $\approx 48000$). The list of the considered communication protocols can be seen from Table~\ref{dd_table}.

In Section~\ref{sec:exps} we have shown that dynamic and periodic averaging can achieve similar driving performance as the serial baseline. For that, dynamic averaging requires substantially less communication than periodic. Examining the evolution of cumulative communication (see Figure~\ref{fig:deep_driving_comm}) shows that---similar to the results on MNIST---dynamic averaging invests a large amount of communication in the beginning and then considerably reduces communication. This pattern is most apparent for dynamic averaging with $\Delta=0.1$.

\begin{table}[ht]
	\begin{center}
		\begin{tabular}{ ll }
			\multicolumn{2}{ c }{\textbf{protocol configurations}} \\
			\textbf{type} & \textbf{parameters} \\ \hline
			\multirow{4}{*}{periodic protocol ($\syncop_{b}$)} & $b=1$ \\ 
			& $b=2$ \\
			& $b=4$ \\
			& $b=8$ \\ 
			\\
			\multirow{4}{*}{dynamic protocol ($\syncop_{\Delta, b}$)} & $b=1, \Delta = 0.01$ \\
			& $b=1, \Delta = 0.05$ \\
			& $b=1, \Delta = 0.1$ \\
			& $b=1, \Delta = 0.3$ \\ 
		\end{tabular}
	\end{center}
	\caption{Overview of the different communication protocol configurations used for deep driving experiment. Except for protocol parameters $\Delta$ and $b$, all other parameters are kept constant between configurations.}
	\label{dd_table}
\end{table}

\subsection{Black-Box Algorithms Evaluation}
\label{app:ssec:black_box}
In comparison with distributed mini-batch SGD \citep{dekel/jmlr/2012,chen2016revisiting} our approach allows to treat the optimization algorithm as a black-box, i.e., dynamic synchronization protocols achieve performance similar to $\dynProt=(\uprule^{mSGD}, \syncop_{\Delta,b})$ with various different optimization algorithms. Particularly, we conducted experiments with the ADAM optimizer \citep{kingma2014adam} and RMSprop \citep{tieleman2012lecture}. The results show that the advantage of dynamic averaging over periodic also holds for those learning algorithms (see Figure~\ref{fig:blackbox}).

\begin{figure}
\centering
\subfigure[$\uprule^{Adam}$]{
		\includegraphics[width=5.5cm]{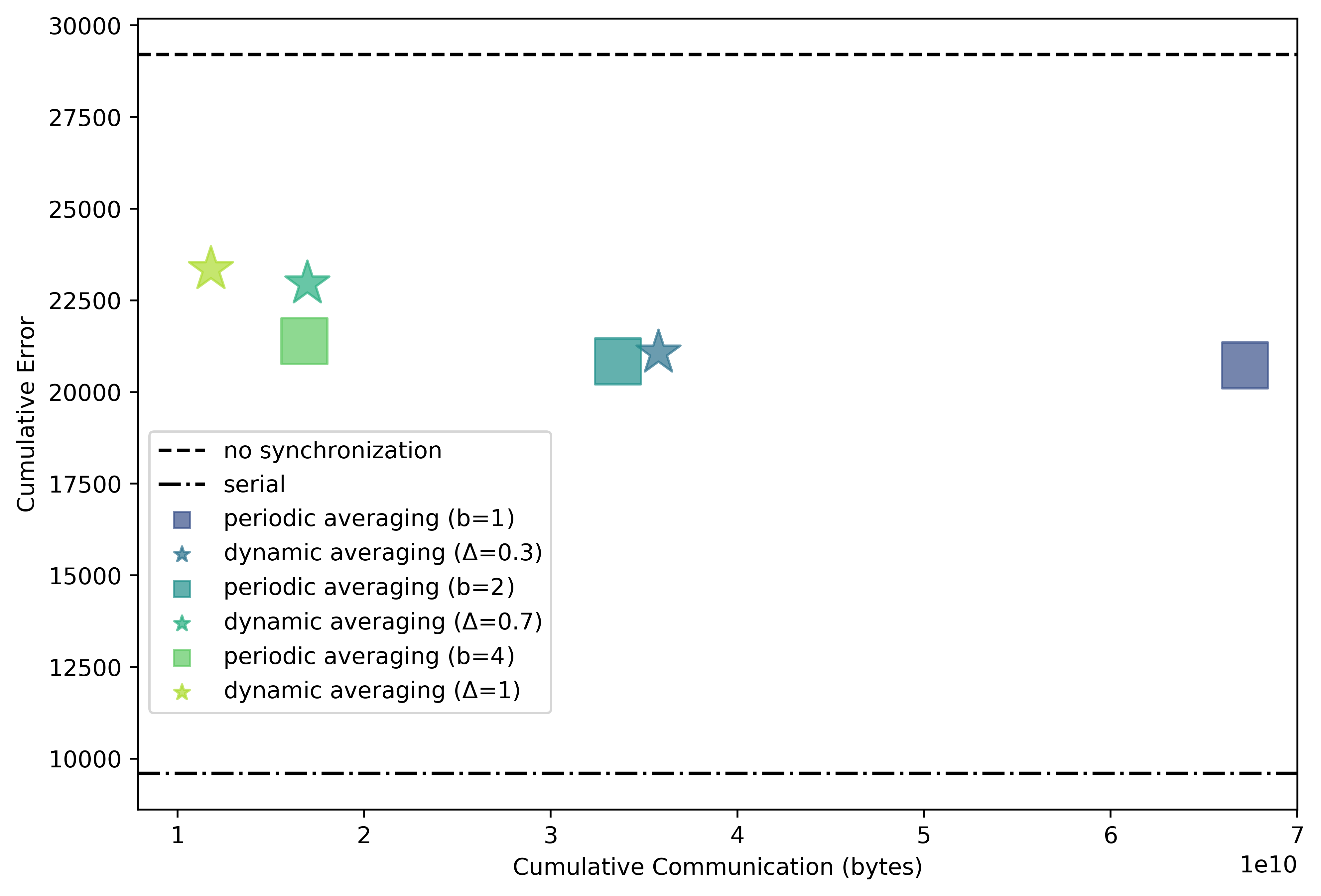}
		\label{fig:performance_adam}
}
\subfigure[$\uprule^{Rmsprop}$]{
		\includegraphics[width=5.5cm]{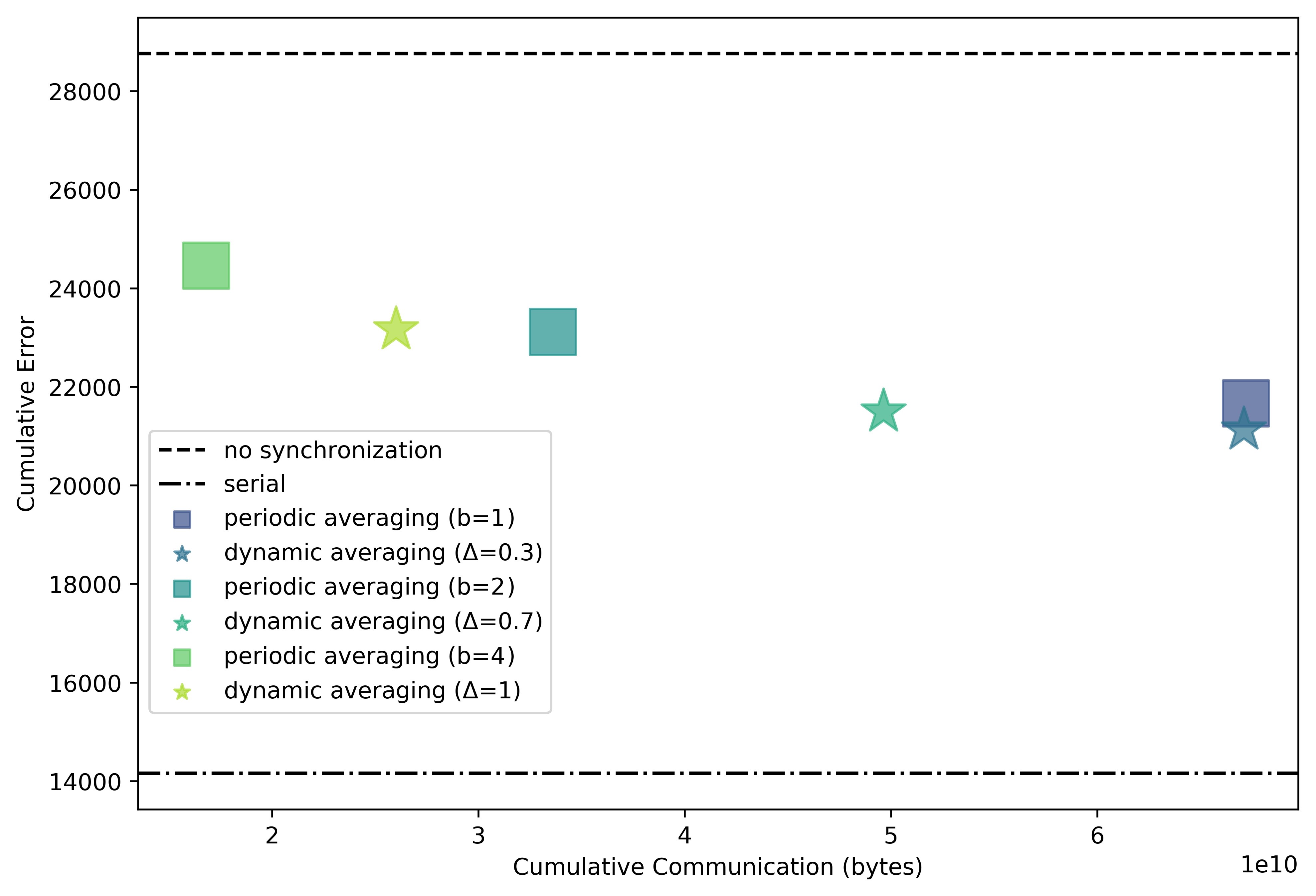}
		\label{fig:performance_rmsprop}
}
\caption{The averaged loss and the cumulative communication of the synchronization protocols for $m=10$ learners. Training is performed on MNIST for $2$ epochs.}
\label{fig:blackbox}
\end{figure}

\subsection{Scale-out Experiments}
\label{app:ssec:scale_out}
Communication size grows when the amount of the local learners is becoming larger, while it is still bounded for periodic and dynamic synchronization protocols (Section~\ref{sec:theory}). In order to see the behavior of periodic and dynamic synchronization while changing the number of learners, we executed an experiment with setups $m=10$, $m=100$ and $m=200$. The same synchronization operators were used in all three setups and the same number of examples was presented to each of the learners.

When the amount of examples per learner is fixed in order to have comparable by 
\begin{wrapfigure}{r}{0.6\textwidth}
	\includegraphics[width=0.6\textwidth]{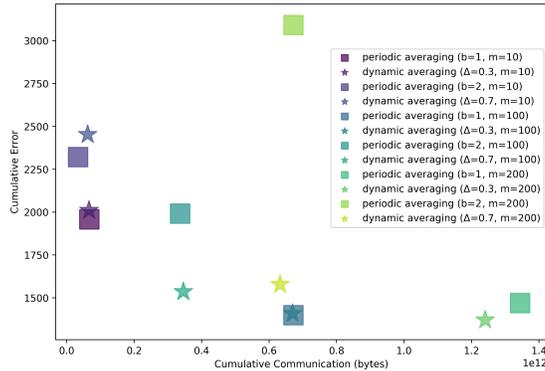}
	\centering
	\caption{
		The cumulative loss and the cumulative communication of the same synchronization protocols for a different amount of learners. Training is performed on MNIST for $2$, $20$ and $40$ epochs for $m=10$, $m=100$, $m=200$ setups correspondingly.}
	\label{fig:10_100_200_performance_ap}
\end{wrapfigure}
quality learners, the larger number of synchronizing models leads to a larger training dataset. 
As a consequence, it leads to a better performance of all of the setups. When the learners are saturated due to the long training and do not differ significantly enough to trigger the local conditions (see Section~\ref{sec:dynProtocol}), the advantage of the dynamic protocols over the periodic ones becomes more pronounced. 

The plot depicted in Figure~\ref{fig:10_100_200_performance_ap} shows the performance of two dynamic and two periodic protocols for the three scaling setups.
In order to make the cumulative loss comparable it was divided by the number of learners, i.e.,
the cumulative loss of $m=100$ learners is the sum of cumulative losses of all of them that is $10$ times 
more than the sum for $m=10$ learners---thus the first sum divided by $100$ is comparable to the second one divided by $10$. 
The plot shows that in the setup with $m=10$ learners $\syncop_{\Delta=0.7}$ shows a comparable result to $\syncop_{b=2}$ with the same amount of communication. 
At the same time with $m=100$ learners it already reaches much smaller cumulative loss. With $m=200$ learners $\syncop_{\Delta=0.3}$ requires less communication than $\syncop_{b=1}$ that was not the case for the previous setups.  that empirically shows advantage of dynamic synchronization for a large dataset setup.

The general approach of using local conditions to communicate efficiently has been successfully applied to massively distributed data streams~\citep{verner2011processing, sagy2010distributed, giatrakos2012prediction}. In the worst case, though, local conditions are violated in every round. Then dynamic averaging communicates as much as periodic averaging, but even then it scales at least as well as current decentralized learning approaches~\citep{mcmahan2017communication,jiang2017collaborative}. E.g., FedAvg~\citep{mcmahan2017communication} performs periodic averaging on a (random) fraction $C\in (0,1]$ of the $\totalLearners$ learners. Therefore, in the worst case dynamic averaging on $\totalLearners$ learners requires as much communication as FedAvg on $C\totalLearners$ learners. In practice, it requires substantially less communication even on the same number of learners. 



\subsection{Stability of the Dynamic Averaging Protocol Regarding Model Initializations}
\label{app:ssec:init}
\begin{figure}[ht]
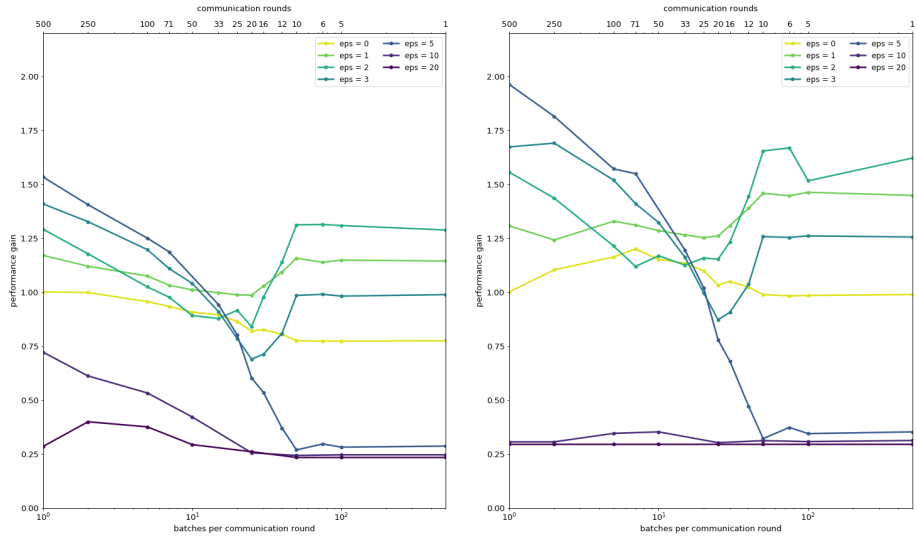

	\centering
	\subfigure{\label{fig:init_1_app}\includegraphics[width=6cm]{images/MNIST_CNN_initialization/initialization_experiment_static.png}}
	\hfill
	\subfigure{\label{fig:init_2_app}\includegraphics[width=6cm]{images/MNIST_CNN_initialization/initialization_experiment_dynamic.png}}
	\caption{Relative performances of averaged models on MNIST obtained from experiments with various heterogeneous model initializations parameterized by $\epsilon$ and various numbers of $b$. All averaged model performances are compared to an experiment with homogeneous model initializations ($\epsilon = 0$) and $b = 1$. \subref{fig:init_1_app} shows results for periodic averaging, \subref{fig:init_2_app} for dynamic averaging.}
	\label{fig:init_ap}
\end{figure}
On MNIST, naive averaging of fully trained models fails if these models are initialized differently (cf. \cite{mcmahan2017communication}). We study the transition from homogeneously initialized and converging learning configurations to heterogeneously initialized and failing models to better understand how the regularizing effects of model averaging impact the quality of the averaged model.
Failing model averaging is due to distances between the weight vectors of the different learners which are large compared to the scale of local convexity around local minima. In decentralized deep learning, there are several potential sources for such large distances. Two important ones are large numbers of local mini-batches, $b$, within one communication round and heterogeneous initializations. We leave out the learning rate in this experiment, though important as well, and leave the study of its impact for future work.

To parameterize the transition from homogeneous to heterogeneous initializations, we start with a homogeneous initialization according to \citet{Glorot10understandingthe} and impose noise at different scales $\epsilon$ on the homogeneous initialization. The noise scale $\epsilon$ is measured relative to the scale of the homogeneous initialization.
We conduct experiments with $m = 10$ learners, $B = 10$ and $500$ training examples per learner for a grid of $\epsilon$ and $b$ combinations. In Fig. \ref{fig:init_ap}, each point corresponds to the average model performance after running one such experiment. The $b$ dependency of the averaged model performance is shown on the abscissa, noise scale $\epsilon$ serves as curve parameter. Fig. \ref{fig:init_ap}\subref{fig:init_1_app} shows the results for periodic averaging, Fig. \ref{fig:init_ap} \subref{fig:init_1_app} for dynamic model averaging. The averaged model accuracies are not shown as absolute values but relative to the configuration with $\epsilon = 0$ and $b = 1$ which corresponds to homogeneously initialized models which communicate after processing one mini-batch.

For homogeneously initialized models, i.e., $\epsilon = 0$, we find a weak dependence of resulting model performance on the number of local mini-batches. Even configurations with very large numbers of local batches between two subsequent model averagings lead to convergence (see \cite{mcmahan2017communication}). However, this finding can be extended to the heterogeneous case, if the scales of these heterogeneities are at the scale of the underlying homogeneous initialization, e.g., $\epsilon \in \{1,2,3\}$. For large heterogeneities, however, model averaging fails, e.g. for $\epsilon = 20$. The transition between these two regimes occurs between $\epsilon = 5$ and $\epsilon = 10$, which show a strong dependency of model convergence on the number of local mini-batches. This critical scale of heterogeneity imposes a constrain on the choice of the dynamic protocol parameter $\Delta$ which indirectly determines the average distances between the different weight vectors before model averaging.

\section{Details on Theory}
\label{app:sec:theory}
In the following we prove that for the base learning algorithm stochastic gradient descent (SGD), dynamic averaging for deep neural networks retains the regret bound of periodic averaging. Before proving this result, we introduce the notion of regret.
The cumulative loss with respect to a reference model (e.g., the best model in hindsight) is denoted \defemph{regret} 
\[
\regret{}{\totalRounds,\totalLearners} = \sum_{\round=1}^\totalRounds\sum_{\learner=1}^{\totalLearners} \loss_{\round}^{\learner}(\model_{\round}^{\learner}) - \loss_{\round}^{\learner}(\model^*)\enspace .
\]
Guarantees are given by a \defemph{regret bound}, i.e., for all reference models $\model^*\in\modelSpace$ and all sequences of losses it holds that the regret 
is smaller than the regret bound $\regretbound{}{\totalRounds,\totalLearners}$ (e.g, for serial Stochastic Gradient Descent $\uprule^{SGD}$ with convex loss functions the regret is bounded by $\mathcal{O}(\sqrt{\totalRounds})$, for a non-synchronizing decentralized learning system using $\totalLearners\in\N$ learners and $\uprule^{SGD}$, the regret is bounded by $\mathcal{O}(\sqrt{\totalLearners\totalRounds})$~\citep{nesterov2013introductory}).

We now show that, given two model configurations $\mathbf{d}$ and $\mathbf{s}$, applying the dynamic averaging operator $\dynavg$ to $\mathbf{d}$ and the static averaging operator $\syncop_b$ to $\mathbf{s}$ increases their average squared pairwise model distances by at most $\Delta$.
\begin{lm}[\citet{kamp2016kernels}]
Let $\mathbf{d}_\round,\mathbf{s}_\round\in\modelSpace^\totalLearners$ be model configurations at time $\round \in \N$. Then
\[
\frac{1}{\totalLearners}\sum_{\learner=1}^\totalLearners\|\dynavg(\mathbf{d}_t)^\learner-\syncop_b(\mathbf{s}_\round)^\learner\|^2\leq\frac{1}{\totalLearners}\sum_{\learner=1}^\totalLearners\|d_\round^\learner-s_\round^\learner\|^2+\Delta\enspace .
\]
\label{syncLemma}
\end{lm}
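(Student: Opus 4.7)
The plan is to split the argument into two cases according to whether $\round$ is a synchronization time, i.e.\ whether $\round \bmod b = 0$. In the non-synchronizing case, both $\dynavg$ and $\syncop_b$ act as the identity on their respective configurations by the first clause of Definition~\ref{def:dynamicsync} and the definition of $\syncop_b$, so the inequality reduces to $\sfrac{1}{m}\sum_\learner \|d_\round^\learner - s_\round^\learner\|^2 \leq \sfrac{1}{m}\sum_\learner \|d_\round^\learner - s_\round^\learner\|^2 + \Delta$, which is immediate since $\Delta \geq 0$.

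The interesting case is the synchronization step, $\round \bmod b = 0$. Write $a^\learner = \dynavg(\mathbf{d}_\round)^\learner$ for brevity, and note that by Definition~\ref{def:dynamicsync} we have $\overline{a} = \overline{d}_\round$ and $\sfrac{1}{m}\sum_\learner \|a^\learner - \overline{a}\|^2 = \divergence(\dynavg(\mathbf{d}_\round)) \leq \Delta$. On the other side, $\syncop_b(\mathbf{s}_\round)^\learner = \overline{s}_\round$ for every $\learner$. The proof plan is then to rewrite $\|a^\learner - \overline{s}_\round\|^2$ by adding and subtracting $\overline{a}$ and summing over $\learner$: the cross term $\sum_\learner \innerprod{a^\learner - \overline{a}}{\overline{a} - \overline{s}_\round}$ vanishes because $\sum_\learner(a^\learner - \overline{a}) = 0$, yielding the bias--variance decomposition
\begin{equation*}
\frac{1}{\totalLearners}\sum_{\learner=1}^{\totalLearners}\|a^\learner - \overline{s}_\round\|^2 \;=\; \frac{1}{\totalLearners}\sum_{\learner=1}^{\totalLearners}\|a^\learner - \overline{a}\|^2 \;+\; \|\overline{a} - \overline{s}_\round\|^2 \;\leq\; \Delta + \|\overline{d}_\round - \overline{s}_\round\|^2,
\end{equation*}
using the divergence bound and the mean-invariance of $\dynavg$ in the last step.

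It remains to control $\|\overline{d}_\round - \overline{s}_\round\|^2$ by the per-learner distances. Writing $\overline{d}_\round - \overline{s}_\round = \sfrac{1}{m}\sum_\learner(d_\round^\learner - s_\round^\learner)$ and applying Jensen's inequality (equivalently, convexity of $\|\cdot\|^2$) gives $\|\overline{d}_\round - \overline{s}_\round\|^2 \leq \sfrac{1}{m}\sum_\learner \|d_\round^\learner - s_\round^\learner\|^2$, which combined with the previous display yields the claim. The main conceptual step is the bias--variance split; beyond that, the argument is bookkeeping, and I do not anticipate a real obstacle. One subtlety worth flagging is that the statement compares $\dynavg$ applied to $\mathbf{d}$ with $\syncop_b$ applied to $\mathbf{s}$, so it is essential to use that $\syncop_b$ collapses all coordinates of $\mathbf{s}_\round$ to $\overline{s}_\round$ at sync times; this is what makes the only non-trivial cross-configuration distance reduce to the gap between the two mean models.
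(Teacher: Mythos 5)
Your proof is correct and follows essentially the same route as the paper's: the same case split on $\round \bmod b$, the same bias--variance decomposition around $\overline{d}_\round$ with the cross term annihilated by mean-invariance and the variance term bounded by $\Delta$ via condition (ii), and the same final reduction of $\|\overline{d}_\round - \overline{s}_\round\|^2$ to the per-learner distances. If anything, your write-up is slightly more careful than the paper's: the last step is indeed Jensen's inequality (a ``$\leq$''), whereas the paper's displayed derivation writes it as an equality.
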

\begin{proof}
We consider the case $\round \mod b = 0$ (otherwise the claim follows immediately).
Expressing the pairwise squared distances via the difference to $\overline{d_t}$ and using the definitions of $\syncop_b$ and $\dynavg$ we can bound
\begin{equation*}
\begin{split}
&\frac{1}{\totalLearners}\sum_{\learner=1}^\totalLearners\|\dynavg(\mathbf{d}_t)^\learner-\syncop_b(\mathbf{s}_t)^\learner\|^2=\frac{1}{\totalLearners}\sum_{\learner=1}^\totalLearners\|\dynavg(\mathbf{d}_t)^\learner-\overline{d_t}+\overline{d_t}-\overline{s_t}\|^2\\
=& \underbrace{\frac{1}{\totalLearners}\sum_{\learner=1}^\totalLearners \|\dynavg(\mathbf{d}_t)^\learner - \overline{d_t} \|^2}_{\leq \Delta \text{, by (ii) of Def.~\ref{def:dynamicsync}}} + 2\innerprod{ 
\underbrace{\frac{1}{\totalLearners}\sum_{\learner=1}^\totalLearners \dynavg(\mathbf{d}_t)^\learner - \overline{d_t}}_{=0 \text{, by (i) of Def.~\ref{def:dynamicsync}}}}{\overline{d_t}-\overline{s_t}} + \|\overline{d_t}-\overline{s_t}\|^2 \\
\leq&\Delta+\|\frac{1}{\totalLearners}\sum_{\learner=1}^\totalLearners(d_\round^\learner-s_\round^\learner)\|^2 = \Delta+\frac{1}{\totalLearners}\sum_{\learner=1}^\totalLearners\|d_\round^\learner-s_\round^\learner\|^2 \enspace .
\end{split}
\end{equation*}  
\qed
\end{proof}
Using this, we provide the proof of Theorem~2 in~\citet{boley2013communication}. For that, we first recapitulate the Theorem.
\begin{thm}[\citet{boley2013communication}]
\label{thm:regret}
Let $\loss$ be an $L$-Lipschitz loss function 
and the update rule $\uprule$ be a contraction with constant $c\in\R$.
Then, for batch sizes $b \geq \log^{-1}_2 c^{-1}$ and divergence thresholds $\Delta \leq \sfrac{\epsilon}{2L}$, the average regret of using a partial synchronization operator $\dynavg$ instead of $\syncop_b$ is bounded by $\epsilon$, i.e., for all rounds $\round \in \N$ it holds that the average regret
$
1/\totalLearners\sum_{\learner=1}^\totalLearners |\loss_\round^\learner(d_\round^\learner)-\loss_\round^\learner(s_\round^\learner)|
$
is bounded by $\epsilon$ where $d$ and $s$ denote the models at learner $\learner$ and time $\round$ maintained by $\dynavg$ and $\syncop_b$, respectively.
\end{thm}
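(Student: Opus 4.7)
The plan is to reduce the loss discrepancy to a model-distance discrepancy via $L$-Lipschitzness and then control the distance by a contraction-plus-perturbation recurrence that treats rounds between synchronizations and synchronization rounds separately. Let $D_\round = \frac{1}{\totalLearners}\sum_{\learner=1}^{\totalLearners} \|d_\round^\learner - s_\round^\learner\|^2$ denote the average squared discrepancy between the dynamic and static configurations. Because both protocols are initialized identically, $D_0 = 0$. By $L$-Lipschitzness of the loss followed by Cauchy--Schwarz (Jensen's inequality for the square root), $\frac{1}{\totalLearners}\sum_\learner |\loss_\round^\learner(d_\round^\learner) - \loss_\round^\learner(s_\round^\learner)| \leq L \cdot \frac{1}{\totalLearners}\sum_\learner \|d_\round^\learner - s_\round^\learner\| \leq L\sqrt{D_\round}$, so it suffices to bound $D_\round$ uniformly in $\round$.

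For the recurrence, consider first a non-sync round ($\round \bmod b \neq 0$). Both protocols apply the same $\uprule$ to the same observed data, so the contraction assumption yields $\|d_{\round+1}^\learner - s_{\round+1}^\learner\| \leq c\|d_\round^\learner - s_\round^\learner\|$ pointwise and hence $D_{\round+1} \leq c^2 D_\round$. At a sync round ($\round \bmod b = 0$), after the local update the two sync operators $\dynavg$ and $\syncop_b$ are applied, and Lemma~\ref{syncLemma} gives $D_\round^{\text{post-sync}} \leq D_\round^{\text{pre-sync}} + \Delta$. Composing $b$ contraction steps with one sync step over a full period produces the recurrence $D_{(k+1)b} \leq c^{2b}\, D_{kb} + \Delta$; telescoping with $D_0 = 0$ yields $D_{kb} \leq \Delta \sum_{j=0}^{k-1} (c^{2b})^j \leq \Delta/(1 - c^{2b})$, and since $D_\round$ only shrinks during non-sync rounds, this bound extends uniformly to every $\round$.

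The hypothesis $b \geq \log_2^{-1} c^{-1}$ rearranges to $c^b \leq 1/2$, so $1 - c^{2b} \geq 3/4$ and therefore $D_\round \leq \tfrac{4}{3}\Delta$. Substituting into the Lipschitz bound closes the argument: $\frac{1}{\totalLearners}\sum_\learner |\loss_\round^\learner(d_\round^\learner) - \loss_\round^\learner(s_\round^\learner)| \leq L\sqrt{\tfrac{4}{3}\Delta}$, which under the stated threshold condition relating $\Delta$ to $\epsilon/L$ is at most $\epsilon$. The main obstacle I anticipate is reconciling constants cleanly, since the recurrence naturally produces a $\sqrt{\Delta}$-type dependence, while the stated condition $\Delta \leq \sfrac{\epsilon}{2L}$ must be read with $\Delta$ interpreted as a squared divergence threshold (consistent with Lemma~\ref{syncLemma} and the local safe-zone condition); the factors $\tfrac{4}{3}$ and $L$ then need to be absorbed into the constant in front of $\epsilon$. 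A secondary subtlety is that $\dynavg$ may legitimately leave some or all models untouched at a sync round, but this is already handled by Lemma~\ref{syncLemma}'s case distinction on $\round \bmod b$, so the recurrence remains valid in both regimes.
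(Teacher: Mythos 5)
Your proof follows essentially the same route as the paper's: reduce the loss gap to model distances via Lipschitzness, then control those distances by alternating the contraction of $\uprule$ between synchronizations with Lemma~\ref{syncLemma} at synchronization rounds, and close the recursion using $c^b\leq\sfrac{1}{2}$. The squared-versus-unsquared mismatch you flag (your bound comes out as $L\sqrt{\tfrac{4}{3}\Delta}$ rather than $L\cdot O(\Delta)$) is present in the paper's own proof as well, which states the sufficient condition as $\frac{1}{\totalLearners}\sum_{\learner}\|d_\round^\learner-s_\round^\learner\|\leq 2\Delta$ but then actually bounds the average of the \emph{squared} distances by $2\Delta$; your version is the more honest accounting of the constants.
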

\begin{proof}
We proof the claim within two steps. First we note that a regret bound is induced by a bound on the average distances between pairs of models at the local nodes. Then we show that such a bound is retained between the local model pairs resulting from static and dynamic synchronization.
Using the Lipschitz continuity of $\loss$ we can see that the average regret at round $\round$ is bounded as 
$\abs{\loss_\round^\learner(d_\round^\learner)-\loss_\round^\learner(s_\round^\learner)} \leq L\|d_\round^\learner-s_\round^\learner\|$.
Hence, for the desired average regret bound of $\epsilon$ it is sufficient to show that at all times $\round \in \N$ it holds that the average pair-wise model distance at the local learners is bounded by $2\Delta=\epsilon/(L)$, i.e.,
\begin{equation}
\label{eq:pairwisecond}
\frac{1}{\totalLearners}\sum_{l=1}^k \|d_\round^\learner-s_\round^\learner\|\leq 2\Delta \enspace .
\end{equation}
We now show that Eq.~\eqref{eq:pairwisecond} is retained throughout all rounds $t \in \N$. Before the first synchronization, i.e., for $t \leq b$, both weight sequences are identical and the bound holds. Moreover, if $\round-1$ is not a synchronization step, i.e., $\round-1 \mod b \neq 0$, the bound is preserved for $\mathbf{d}_{\round}$ and $\mathbf{s}_{\round}$ due to $\uprule$ being a contraction.
Hence, the crucial case is $\round > b$ with $(\round-1) \mod b = 0$. Using Lemma~\ref{syncLemma} we get that
\[
\frac{1}{\totalLearners}\sum_{\learner=1}^\totalLearners\|\dynavg(\mathbf{d}_t)^\learner-\syncop_b(\mathbf{s}_\round)^\learner\|^2\leq\underbrace{\frac{1}{\totalLearners}\sum_{\learner=1}^\totalLearners\|d_\round^\learner-s_\round^\learner\|^2}_{(\ast)}+\Delta\enspace .
\]
Applying the contraction property of $\uprule$ on $(\ast)$ yields
\[
\frac{1}{\totalLearners}\sum_{\learner=1}^\totalLearners\|d_\round^\learner-s_\round^\learner\|^2 \leq c^b \underbrace{\frac{1}{\totalLearners}\sum_{\learner=1}^\totalLearners\|d_{\round-b}^\learner-s_{\round}^\learner\|^2}_{\leq \Delta\text{ by IH}}\leq c^b\Delta+\Delta\leq 2\Delta\enspace .
\]
The last inequality follows from the fact that $b\geq \log^{-1}_2 c^{-1} = \log_c\sfrac{1}{2}$.
\qed
\end{proof}
It follows that if the assumptions of Theorem~\ref{thm:regret} hold, then dynamic averaging retains the regret bound of periodic averaging.

It remains to show that SGD is a contraction. For convex loss functions, one can show~\citep{zinkevich/nips/2010} that it is a contraction for sufficiently small constant learning rates: for $\eta \leq (\rho L+\lambda)^{-1}$ the updates do contract with constant $c=1-\eta\lambda$. Here, $\eta,\lambda\in\R_+$ denote the learning rate and regularization parameter, $L\in\R_+$ the Lipschitz constant of the loss function, and $\rho\in\R_+$ the data radius (i.e., for all $\sample\in\sampleSpace\|\sample\|_2\leq \rho$). 

This also holds for the non-convex case if for any round $\round\in\totalRounds$ the loss function is locally convex in a bounded region around the span of the models in $\mathbf{d}_\round$ and $\mathbf{s}_\round$. Since the distance of all models in $\mathbf{d}_\round$ to the models $\mathbf{s}_\round$ is bounded by $2\Delta$ and the distance of all models in $\mathbf{d}_\round$ to their average $\overline{d_\round}$ is bounded by $\Delta$, all models lie in a $2\Delta$ bounded region around $\overline{d_\round}$. If moreover the update magnitude is bounded by $R$, then all models remain in a $2\Delta R$ bounded region. Since the update magnitude for SGD is given by the norm of the gradient and the learning rate, the update magnitude can be bounded by $R=\eta L$. Thus for all models, if in any round $\round\in\N$ the loss function is locally convex in a $2\Delta R$-radius around $\overline{d_\round}$, then SGD is a contraction.
To see that this is a non-trivial but realistic assumption, see~\citet{sanghavi2017local, Wang2017StochasticNO} on local convexity and~\citet{nguyen2017loss,keskar2017on} on the the loss surface of deep learning.
It follows that under these assumptions, dynamic averaging with SGD retains the regret bound of periodic averaging. 
\begin{crl}
	Let $\loss$ be an $L$-Lipschitz loss function, $\lambda\in\R$ a regularization parameter, $\eta \leq (\rho L+\lambda)^{-1}$ a learning rate, $\rho\in\R_+$ the data radius, $b \geq \log^{-1}_2 c^{-1}$ the batch sizes and $\Delta \leq \sfrac{\epsilon}{2L}$ the divergence threshold, dynamic averaging retains the regret bound of periodic averaging. In round $\round\in\N$, let $\mathbf{d}_\round$ and $\mathbf{s}_\round$ denote the models maintained by dynamic and periodic averaging, respectively. If in any round $\round\in\N$ the loss function is locally convex in a $2\Delta R$-radius around $\overline{d_\round}$, then dynamic averaging with stochastic gradient descent retains the regret bound of periodic averaging.  
	%
	\label{app:cor:lossbounddynamic}
\end{crl}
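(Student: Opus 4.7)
The plan is to derive the corollary as a direct consequence of Theorem~\ref{thm:regret}, by verifying that stochastic gradient descent satisfies the contraction hypothesis required there. First I would note that the Lipschitz, learning-rate, batch-size, and divergence-threshold assumptions already match the hypotheses of Theorem~\ref{thm:regret} verbatim, so the only remaining work is to exhibit a contraction constant $c\in(0,1)$ for $\uprule=\uprule^{\text{SGD}}$ such that $b\geq\log_2^{-1}c^{-1}$ is satisfied for the chosen $b$. Once this is in hand, applying Theorem~\ref{thm:regret} gives the desired average-regret bound of $\epsilon$ at every round, and because periodic averaging $\syncop_b$ is known to retain the serial SGD regret bound, the triangle-style argument $|R_\dynProt - R_\uprule|\leq |R_\dynProt - R_\periodProt| + |R_\periodProt - R_\uprule|$ transfers the bound from $\periodProt$ to $\dynProt$.

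Next I would establish the contraction step. For convex objectives with $\ell_2$-regularization of strength $\lambda$, a standard calculation shows that for $\eta\leq(\rho L+\lambda)^{-1}$ the map $\model\mapsto\model-\eta\nabla\loss(\model)$ is a contraction with constant $c=1-\eta\lambda<1$; this is exactly the bound quoted in the discussion preceding the corollary and may be cited. For the non-convex case handled here, the same calculation goes through as long as the two compared iterates $d_\round^\learner$ and $s_\round^\learner$ lie in a region where $\loss$ behaves convexly. I would therefore argue inductively that all iterates produced by both $\dynProt$ and $\periodProt$ stay inside a $2\Delta R$-ball around the running average $\overline{d_\round}$: by Definition~\ref{def:dynamicsync}~(ii) the dynamic iterates lie within a $\Delta$-ball of $\overline{d_\round}$, the pairwise distance bound from the proof of Theorem~\ref{thm:regret} gives $\|d_\round^\learner-s_\round^\learner\|\leq 2\Delta$, and a single SGD step can move any model by at most $R=\eta L$ because the gradient norm is bounded by $L$. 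Combining these observations places all iterates in a $2\Delta R$-ball around $\overline{d_\round}$, which by hypothesis is a region of local convexity, so the contraction estimate applies round by round.

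With the contraction in place, I would invoke Theorem~\ref{thm:regret} to bound the per-round average regret between $\dynProt$ and $\periodProt$ by $\epsilon=2L\Delta$, sum over rounds, and absorb the resulting $\totalRounds\totalLearners\epsilon$ term into the regret bound $\regretbound{\periodProt}{\totalRounds,\totalLearners}$ of periodic averaging, which finishes the claim. The tunable parameter $\Delta$ can be chosen small enough so that the extra term does not dominate the underlying $\mathcal{O}(\sqrt{\totalLearners\totalRounds})$ regret of SGD.

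The main obstacle, and the step I would treat most carefully, is the inductive verification that every iterate stays inside the locally convex $2\Delta R$-ball throughout training, because the contraction argument needs this invariant at every single round in order for the telescoping distance bound in the proof of Theorem~\ref{thm:regret} to apply. The subtlety is that the reference ball moves with $\overline{d_\round}$, so one must show that translating the ball between rounds does not allow any $s_\round^\learner$ to escape the currently convex region; this is where the bound $R=\eta L$ on the per-step update magnitude, together with the divergence bound from Definition~\ref{def:dynamicsync}, is essential. Everything else is routine bookkeeping or a direct quotation of the earlier results.
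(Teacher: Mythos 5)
Your proposal is correct and follows essentially the same route as the paper: the corollary is obtained by verifying that SGD is a contraction with constant $c=1-\eta\lambda$ under the stated learning-rate condition, arguing that all iterates of both protocols remain in a $2\Delta R$-region around $\overline{d_\round}$ (via Definition~\ref{def:dynamicsync}, the $2\Delta$ pairwise distance bound, and the update-magnitude bound $R=\eta L$) so that local convexity suffices, and then invoking Theorem~\ref{thm:regret}. Your additional bookkeeping (the explicit induction on the moving ball and the triangle-style transfer to the serial regret bound) only elaborates on steps the paper states more tersely.
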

Together with Proposition~\ref{prop:contAvgEqualMiniBatch} it follows that dynamic averaging is consistent as in Definition~\ref{def:efficiency}.

\pagebreak
\section{Federated Learning: Different Sampling Rates and non-iid Data}
In order to analyze the proposed approach theoretically, we used a strict notion with fixed, balanced sampling rate and iid data. In contrast,~\citet{mcmahan2017communication} introduced federated learning as a learning task with (i) non-iid data, (ii) unbalanced data sampling rates, (iii) massively distributed systems, and (iv) limited communication infrastructure. 
\begin{algorithm2e}[h]
	\caption{Dynamic Averaging Protocol for Unbalanced Data}
	\label{alg:protocolUnbalanced}
	\smallskip
	\textbf{Input:}    divergence threshold $\Delta$, batch size $b$\\    
	\smallskip
	\textbf{Initialization:}\\
	\begin{algorithmic}[0]
		\STATE local models $\model^1_1,\dots,\model^\totalLearners_1 \leftarrow$ one random $\model$
		\STATE reference vector $r \leftarrow \model$
		\STATE violation counter $v \leftarrow 0$
	\end{algorithmic}
	\smallskip
	\textbf{Round }$\round$\textbf{ at node }$\learner$\textbf{:}\\
	\begin{algorithmic}[0]
		\STATE \textbf{observe} $\locsample_\round^\learner\subset\sampleSpace\times\outputSpace$ with $\left|\locsample_\round^\learner\right|=B^\learner$ 
		\STATE \textbf{update} $\model^\learner_{\round-1}$ using the learning algorithm $\uprule$\\
		\IF{$\round \mod b=0$ \textbf{and} $\|\model^\learner_{\round}-r\|^2> \Delta$}
		\STATE \textbf{send} $\model^\learner_{\round}$ and $B^\learner$ to coordinator (violation)
		\ENDIF
	\end{algorithmic}
	\smallskip
	\textbf{At coordinator on violation:}\\
	\begin{algorithmic}[0]
		\STATE \textbf{let} $\balancingSet$ be the set of nodes with violation
		\STATE $v\leftarrow v+\card{\balancingSet}$
		\STATE \textbf{if} $v=\totalLearners$ \textbf{then} $\balancingSet\leftarrow [\totalLearners]$, $v\leftarrow 0$
		\STATE $N\leftarrow \sum_{\learner\in \balancingSet}B^\learner$
		\WHILE{$\balancingSet \neq [\totalLearners]$ \textbf{and} $\left\|\frac{1}{N}\sum_{\learner \in \balancingSet}B^\learner\model^\learner_\round-r\right\|^2> \Delta$}
		\STATE \textbf{augment} $\balancingSet$ by augmentation strategy
		\STATE \textbf{receive} models from nodes added to $\balancingSet$
		\STATE $N\leftarrow \sum_{\learner\in \balancingSet}B^\learner$
		\ENDWHILE
		\STATE \textbf{send} model $\avgmodel=\frac{1}{N}\sum_{\learner \in \balancingSet}B^\learner\model^\learner_\round$ to nodes in $\balancingSet$
		\STATE \textbf{if} $\balancingSet=[\totalLearners]$ also set new reference vector $r\leftarrow\avgmodel$
	\end{algorithmic}
\end{algorithm2e}\\

Naturally, dynamic averaging is well suited for massively distributed systems with limited communication infrastructure---in the worst case it performs similar to periodic averaging, in the best case it has similar predictive performance with orders of magnitude less communication. Regarding non-iid data,~\citet{mcmahan2017communication} provide empirical indications that differences in local data distributions do not significantly deteriorate the learning process. While this should hold for dynamic averaging as well, further research is required to substantiate this claim, both for federated learning and dynamic averaging (cf.~\citet{smith2017federated}).

In order to tackle the problem of unbalanced sampling rates, a simple modification of the dynamic averaging protocol is required. For that, assume that each local learner $\learner\in [\totalLearners]$ observes $B^\learner\in\N$ samples. 
We can account for the different sampling rates in the training process by using a weighted model averaging, where the weight of each model depends on the number of observed examples. Let $N=\sum_{\learner=1}^\totalLearners B^\learner$ be the total number of samples observed in each round. Then, the weighted average of models is given by
\[
\avgmodel=\frac{1}{N}\sum_{\learner=1}^\totalLearners B^\learner\model^\learner_\round\enspace .
\]
Note that this can be generalized analogously to time-dependent sampling rates $B^\learner_\round$. 
Using this weighted average, the dynamic averaging protocol for unbalanced data is given in Algorithm~\ref{alg:protocolUnbalanced}.

\end{document}